\documentclass{article}

\usepackage[preprint]{neurips_2026}
\usepackage[utf8]{inputenc}
\usepackage[T1]{fontenc}
\usepackage{hyperref}
\usepackage{url}
\usepackage{booktabs}
\usepackage{amsfonts}
\usepackage{amsmath}
\usepackage{amssymb}
\usepackage{amsthm}
\usepackage{nicefrac}
\usepackage{microtype}
\usepackage{graphicx}
\usepackage{xcolor}
\usepackage{algorithm}
\usepackage{algorithmic}
\usepackage{subcaption}
\usepackage{thmtools}
\usepackage{thm-restate}

\newtheorem{proposition}{Proposition}

\newtheorem{remark}{Remark}
\theoremstyle{definition}

\title{Gradient Boosting within a Single Attention Layer\thanks{Code available at \url{https://github.com/salehsargolzaee/boosted-attention}}}

\author{
  Saleh Sargolzaei \\
  University of Windsor \\
  \texttt{sargolz@uwindsor.ca}
}

\begin{document}

\maketitle

\begin{abstract}
Transformer attention computes a single softmax-weighted average over values---a one-pass estimate that cannot correct its own errors.
We introduce \emph{gradient-boosted attention}, which applies the principle of gradient boosting \emph{within} a single attention layer: a second attention pass, with its own learned projections, attends to the prediction error of the first and applies a gated correction.
Under a squared reconstruction objective, the construction maps onto Friedman's gradient boosting machine, with each attention pass as a base learner and the per-dimension gate as the shrinkage parameter.
We show that a single Hopfield-style update erases all query information orthogonal to the stored-pattern subspace, and that further iteration under local contraction can collapse distinct queries in the same region to the same fixed point.
We also show that separate projections for the correction pass can recover residual information inaccessible to the shared-projection approach of Tukey's twicing.
On 10M-token subsets of WikiText-103 and OpenWebText, gradient-boosted attention improves test perplexity by $6.0\%$ and $5.6\%$ over standard attention, outperforming both Twicing Attention and a parameter-matched wider baseline on both benchmarks, with two rounds capturing most of the benefit.
We further show, both theoretically and empirically, that the mechanism requires the additive residual structure of Pre-LN transformers: under Post-LN, the same architecture degrades perplexity by $9.6\%$.
\end{abstract}

\section{Introduction}
\label{sec:intro}

The attention mechanism \citep{vaswani2017attention} computes a softmax-weighted combination of value vectors, conditioned on query-key similarities.
This is a single-pass operation: the query is compared against all keys once, a probability distribution is formed, and values are averaged accordingly.
When this estimate is poor, whether because the query is ambiguous, the relevant keys are diluted among distractors, or the softmax assigns weight to incompatible values, there is no within-layer mechanism to detect or correct the error.

A parallel limitation has long been understood in classical machine learning.
A single regression tree or a single kernel smoother produces a biased estimate; the bias can be reduced by fitting a second model to the \emph{residual} of the first.
This is the insight behind gradient boosting \citep{friedman2001greedy}, which builds an additive model $F = f_0 + \eta_1 f_1 + \eta_2 f_2 + \cdots$ by sequentially fitting each $f_m$ to the negative gradient of the loss with respect to the current prediction.
With strong base learners, two or three rounds often suffice \citep{friedman2001greedy}.

We apply this principle within a single attention layer.
Round~0 produces an initial estimate via standard attention.
The residual---the difference between the input and this estimate---is then passed through a second attention round with its own learned $\mathbf{W}_Q, \mathbf{W}_K, \mathbf{W}_V$ projections, where all three projections operate on the residual signal.
A per-dimension learned gate controls how much of the correction to apply.
Crucially, the second round derives its keys and values from the residual, allowing it to operate in a different subspace than the first round and recover information that shared-kernel corrections cannot amplify.
The result is a drop-in replacement for standard attention that adds one extra set of projections and a small gating network (approximately 18\% additional parameters overall).

\paragraph{Why not simply iterate attention?}
A natural alternative is to iterate the same attention operation: given output $\hat{y}$, feed it back as the query and repeat.
This corresponds to running the modern Hopfield network \citep{ramsauer2021hopfield} toward its fixed point.
We show (Section~\ref{sec:theory}, Proposition~\ref{prop:erasure}) that this approach systematically destroys query information.
Under the local contraction conditions established by \citet{ramsauer2021hopfield}, queries in the same contraction region converge to the same fixed point, which is determined by the stored patterns and temperature alone.
This makes iteration appropriate for content-addressable memory (where the goal is to retrieve a stored pattern) but harmful for the transformer's actual task (where the query carries information that must be preserved in the output).
Our negative results---including training with Deep Equilibrium Models (DEQ) \citep{bai2019deep}---confirm that the iterative approach failed under all training procedures we tested (Section~\ref{sec:negative}).
Gradient-boosted attention avoids this failure mode by feeding a \emph{different} signal (the residual) through \emph{different} projections, rather than re-processing the same state through the same function.

\paragraph{Relation to prior work.}
The idea that transformers implement a form of gradient descent is not new.
\citet{cheng2024transformers} proved that each transformer layer implements one step of functional gradient descent in a reproducing kernel Hilbert space---which is, mathematically, one round of gradient boosting---though they did not make this connection to the boosting literature explicit.
\citet{abdullaev2025twicing} applied Tukey's twicing \citep{tukey1977exploratory} within each attention layer, smoothing the residual $\mathbf{V} - \mathbf{A}\mathbf{V}$ with the \emph{same} attention matrix $\mathbf{A}$.
Their correction reuses the same attention kernel, yielding $(2\mathbf{A} - \mathbf{A}^2)\mathbf{V}$, without learned gating or separate projections for the correction pass.
Differential Transformer \citep{ye2025differential} computes two attention maps in parallel and subtracts them, canceling shared noise; this is a parallel subtractive mechanism, while ours is sequential and error-corrective.
We discuss these and other related works in detail in Section~\ref{sec:related}.

\paragraph{Contributions.}
\begin{enumerate}
    \item We introduce gradient-boosted attention, a multi-round attention mechanism in which each round corrects the prediction error of previous rounds using separate learned projections and a per-dimension gate. The architecture maps directly onto Friedman's Multiple Additive Regression Trees (MART) framework.

    \item We identify a fundamental limitation of iterative retrieval: query information orthogonal to the stored patterns is erased after one step, and further iteration collapses distinct queries to the same fixed point (Proposition~\ref{prop:erasure}). We establish a formal correspondence to Friedman's gradient boosting under a reconstruction objective (Proposition~\ref{prop:mart}), and show that shared-projection corrections like Twicing are confined to the row span of the original value matrix and are not fitted to the residual; gradient-boosted attention's separate projections escape both constraints (Proposition~\ref{prop:separate}).

    \item We show that Post-LN placement disrupts the additive structure required by gradient boosting: the well-known rank-$(d{-}2)$ LayerNorm Jacobian \citep{xiong2020layer} couples dimensions and rescales corrections by a data-dependent factor, breaking the element-wise gate control (Proposition~\ref{prop:normalization}). Empirically, gradient-boosted attention improves perplexity by $6.0\%$ under Pre-LN but degrades it by $9.6\%$ under Post-LN, while a parameter-matched wider baseline improves under both placements.

    \item On 10M-token subsets of WikiText-103 and OpenWebText, gradient-boosted attention improves test perplexity by $6.0\%$ and $5.6\%$ over standard attention, outperforming Twicing Attention and a parameter-matched wider baseline on both benchmarks, with two rounds capturing most of the benefit.
\end{enumerate}

\section{Background}
\label{sec:background}

\subsection{Gradient Boosting}

Gradient boosting \citep{friedman2001greedy} constructs an additive model $F_M(x) = \sum_{m=0}^{M} \eta_m f_m(x)$ by sequential residual fitting.
Starting from an initial estimate $F_0(x) = f_0(x)$, each subsequent base learner $f_m$ is fit to the negative gradient of the loss:
\begin{equation}
\label{eq:boosting}
    r_m = -\frac{\partial L(y, F)}{\partial F}\bigg|_{F=F_{m-1}(x)}, \qquad f_m \approx r_m, \qquad F_m = F_{m-1} + \eta_m f_m,
\end{equation}
where $\eta_m \in (0, 1]$ is a shrinkage parameter that regularizes the update.
For the squared loss $L = \frac{1}{2}\|y - F\|^2$, the negative gradient is simply the residual $r_m = y - F_{m-1}(x)$.
A key empirical finding is that with strong base learners (e.g., deep trees), two to three rounds often capture most of the achievable improvement, with rapidly diminishing returns thereafter.

\subsection{Attention as Hopfield Retrieval}

\citet{ramsauer2021hopfield} showed that transformer attention is mathematically equivalent to one step of the modern continuous Hopfield network.
Given stored patterns $\mathbf{X} = [\mathbf{x}_1, \ldots, \mathbf{x}_N] \in \mathbb{R}^{d \times N}$ and a query $\boldsymbol{\xi} \in \mathbb{R}^d$, the update rule is:
\begin{equation}
\label{eq:hopfield}
    T(\boldsymbol{\xi}) = \mathbf{X} \, \mathrm{softmax}(\beta \mathbf{X}^\top \boldsymbol{\xi}),
\end{equation}
where $\beta > 0$ is the inverse temperature.
Setting $\beta = 1/\sqrt{d_h}$ (the per-head dimension) and allowing separate key/value projections recovers standard attention.
The associated energy function is
\begin{equation}
\label{eq:energy}
    E(\boldsymbol{\xi}) = -\frac{1}{\beta}\log\sum_{i=1}^{N} \exp(\beta \, \mathbf{x}_i^\top \boldsymbol{\xi}) + \frac{1}{2}\|\boldsymbol{\xi}\|^2 + \text{const},
\end{equation}
whose gradient yields the update rule as $\boldsymbol{\xi}_{\text{new}} = -\nabla_{\boldsymbol{\xi}} E(\boldsymbol{\xi}) + \boldsymbol{\xi} = T(\boldsymbol{\xi})$.
Ramsauer et al.\ proved that under sufficient pattern separation relative to $\beta$, the iterates $T^t(\boldsymbol{\xi})$ converge exponentially to fixed points, and identified three types: global averages over all patterns, metastable states averaging over subsets, and near-single-pattern retrieval.

\section{Method}
\label{sec:method}

\subsection{Gradient-Boosted Attention}

\begin{figure}[t]
\centering
\includegraphics[width=\textwidth]{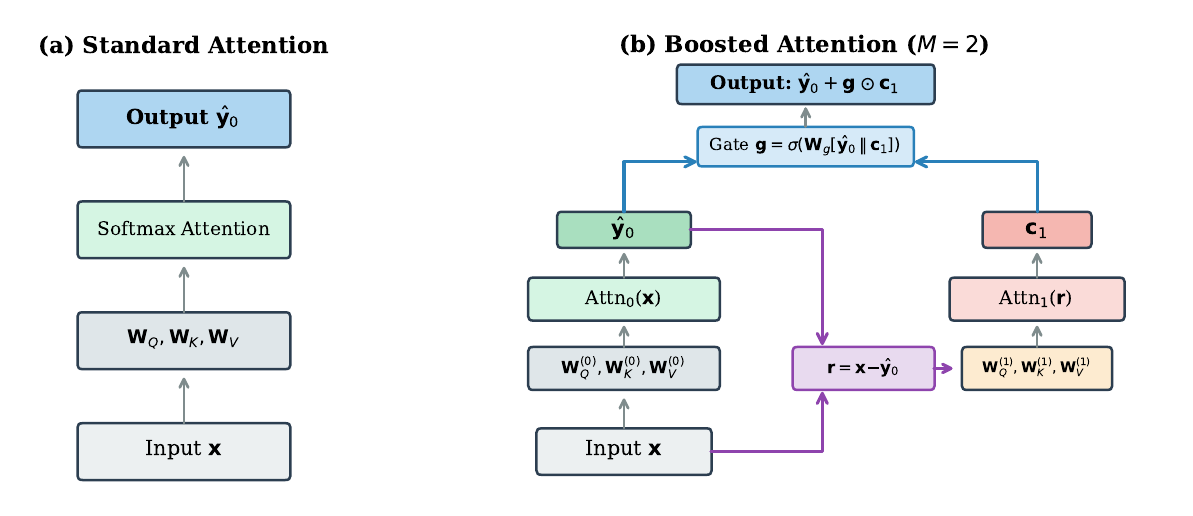}
\caption{(a)~Standard attention computes a single softmax-weighted average. (b)~Gradient-boosted attention ($M{=}2$) adds a second pass where the prediction residual $\mathbf{r} = \mathbf{x} - \hat{\mathbf{y}}_0$ is projected through separate $\mathbf{W}_Q^{(1)}, \mathbf{W}_K^{(1)}, \mathbf{W}_V^{(1)}$ to produce queries, keys, and values. A learned gate $\mathbf{g}$ controls the per-dimension correction magnitude.}
\label{fig:architecture}
\end{figure}

Let $\mathbf{x} \in \mathbb{R}^{B \times T \times d}$ denote the input to an attention layer, and let $n_h$ denote the number of attention heads with per-head dimension $d_h = d / n_h$. We define $M$ attention rounds, each with its own learned projections $\mathbf{W}_Q^{(m)}, \mathbf{W}_K^{(m)}, \mathbf{W}_V^{(m)} \in \mathbb{R}^{d_h \times d}$ (per head):
\begin{equation}
\label{eq:attn_round}
    \mathrm{Attn}_m(\mathbf{z}) = \mathrm{softmax}\!\left(\frac{\mathbf{W}_Q^{(m)} \mathbf{z} \cdot (\mathbf{W}_K^{(m)} \mathbf{z})^\top}{\sqrt{d_h}}\right) \mathbf{W}_V^{(m)} \mathbf{z}.
\end{equation}
All three projections are applied to the same input $\mathbf{z}$: the original input $\mathbf{x}$ for round~0, and the current residual $\mathbf{r}_m$ for rounds $m \geq 1$.
Each correction round thus operates in its own subspace of the residual, with keys and values that reflect what has \emph{not yet been predicted} rather than the original context.

Algorithm~\ref{alg:boosted} summarizes the forward pass.
\begin{algorithm}[t]
\caption{Gradient-Boosted Attention Forward Pass}
\label{alg:boosted}
\begin{algorithmic}[1]
\STATE \textbf{Input:} $\mathbf{x} \in \mathbb{R}^{B \times T \times d}$, number of rounds $M$
\STATE $\hat{\mathbf{y}}_0 \leftarrow \mathrm{Attn}_0(\mathbf{x})$ \hfill \COMMENT{Round 0: initial estimate}
\STATE $F \leftarrow \hat{\mathbf{y}}_0$
\FOR{$m = 1$ to $M-1$}
    \STATE $\mathbf{r}_m \leftarrow \mathbf{x} - F$ \hfill \COMMENT{Prediction error (negative gradient for $L_2$ loss)}
    \STATE $\mathbf{c}_m \leftarrow \mathrm{Attn}_m(\mathbf{r}_m)$ \hfill \COMMENT{Q, K, V all from residual via separate projections}
    \STATE $\mathbf{g}_m \leftarrow \sigma\!\left(\mathbf{W}_g^{(m)}[F \,\|\, \mathbf{c}_m]\right) \in [0,1]^d$ \hfill \COMMENT{Per-dimension shrinkage gate}
    \STATE $F \leftarrow F + \mathbf{g}_m \odot \mathbf{c}_m$ \hfill \COMMENT{Gated correction}
\ENDFOR
\STATE \textbf{Return:} $\mathbf{W}_{\mathrm{out}} \cdot F$
\end{algorithmic}
\end{algorithm}

\paragraph{The gate as a learned shrinkage parameter.}
In Friedman's framework, the shrinkage parameter $\eta_m$ is a scalar that regularizes each boosting step.
Our gate $\mathbf{g}_m = \sigma(\mathbf{W}_g^{(m)}[F \| \mathbf{c}_m]) \in [0,1]^d$ generalizes this in two ways: it is (i) per-dimension, allowing selective correction along different feature directions, and (ii) input-dependent, allowing the model to vary the correction magnitude based on the current prediction and the proposed correction.
When collapsed to a scalar, the gate reduces exactly to the shrinkage parameter.
We verify in our ablations (Section~\ref{sec:ablations}) that all gate variants (no gate, scalar, per-dimension MLP) improve over the single-round baseline, suggesting that the residual-attention mechanism is the primary ingredient.

\paragraph{Computational cost.}
With $M$ rounds, the attention computation is approximately $M$ times the cost of standard attention, plus a small gating network.
In practice, $M = 2$ adds approximately 18\% total parameters.
By contrast, Twicing Attention \citep{abdullaev2025twicing} adds zero parameters, since it reuses the same attention matrix.
Our additional cost buys separate projections for the correction pass, which Proposition~\ref{prop:separate} shows can recover information inaccessible to shared-kernel correction.
We show in Section~\ref{sec:experiments} that the improvement cannot be replicated by simply widening the standard model to match the parameter count.

\subsection{Connection to MART}
\label{sec:mart_connection}

Table~\ref{tab:mart} makes the correspondence between MART and gradient-boosted attention explicit.
Under the squared reconstruction objective $L = \frac{1}{2}\|\mathbf{x} - F\|^2$, the forward pass of Algorithm~\ref{alg:boosted} instantiates gradient boosting exactly: the residual is the negative gradient, each attention round is a base learner, and the gate is the shrinkage parameter.
Note that this is an internal objective governing the within-layer residual computation; the model is trained end-to-end with the task loss (e.g., cross-entropy for language modeling).

\begin{table}[t]
\centering
\caption{Correspondence between gradient boosting (MART) and gradient-boosted attention.}
\label{tab:mart}
\begin{tabular}{@{}lll@{}}
\toprule
\textbf{Component} & \textbf{MART} \citep{friedman2001greedy} & \textbf{Ours} \\
\midrule
Initial estimate & $f_0(x)$ (e.g., mean of targets) & $\hat{\mathbf{y}}_0 = \mathrm{Attn}_0(\mathbf{x})$ \\
Pseudo-residual & $r_m = y - F_{m-1}(x)$ & $\mathbf{r}_m = \mathbf{x} - F_{m-1}$ \\
Base learner & $f_m$ fit to $r_m$ & $\mathbf{c}_m = \mathrm{Attn}_m(\mathbf{r}_m)$ \\
Shrinkage & Scalar $\eta_m \in (0,1]$ & Gate $\mathbf{g}_m = \sigma(\mathbf{W}_g^{(m)}[\cdot]) \in [0,1]^d$ \\
Update & $F_m = F_{m-1} + \eta_m f_m$ & $F_m = F_{m-1} + \mathbf{g}_m \odot \mathbf{c}_m$ \\
\bottomrule
\end{tabular}
\end{table}

\section{Theoretical Analysis}
\label{sec:theory}

We present four results that collectively motivate and justify the gradient-boosted attention design.

\subsection{Iterating Retrieval Erases Query Information}

The most natural way to ``boost'' attention would be to iterate the same operation: compute $T(\boldsymbol{\xi})$, then $T(T(\boldsymbol{\xi}))$, and so on, converging to the Hopfield fixed point.
We state the result in the Hopfield setting (Eq.~\ref{eq:hopfield}), where the analysis is cleanest.
Part~(a) generalizes directly to standard attention with separate projections: the softmax always produces convex weights, so the output lies in $\mathrm{conv}(\mathbf{V})$ regardless of how $\mathbf{Q}$, $\mathbf{K}$, $\mathbf{V}$ are computed.
Our negative experiments (Section~\ref{sec:negative}) confirm that iteration fails in practice even with learned $\mathbf{W}_Q, \mathbf{W}_K, \mathbf{W}_V$.

\begin{proposition}[One-step projection and information loss]
\label{prop:erasure}
Let $\mathbf{X} \in \mathbb{R}^{d \times N}$ be a matrix of stored patterns and $T(\boldsymbol{\xi}) = \mathbf{X}\,\mathrm{softmax}(\beta \mathbf{X}^\top \boldsymbol{\xi})$ the Hopfield update~(\ref{eq:hopfield}).
Then:
\begin{enumerate}
    \item[(a)] For every $\boldsymbol{\xi}$, $T(\boldsymbol{\xi}) \in \mathrm{conv}(\mathbf{X}) \subseteq \mathrm{col}(\mathbf{X})$.

    \item[(b)] If $\boldsymbol{\xi} = \boldsymbol{\xi}_\parallel + \boldsymbol{\xi}_\perp$ with $\boldsymbol{\xi}_\parallel \in \mathrm{col}(\mathbf{X})$ and $\boldsymbol{\xi}_\perp \perp \mathrm{col}(\mathbf{X})$, then $T(\boldsymbol{\xi}) = T(\boldsymbol{\xi}_\parallel)$.
    Hence all information in the component orthogonal to $\mathrm{col}(\mathbf{X})$ is erased after a single step.

    \item[(c)] Any fixed point $\boldsymbol{\xi}^*$ satisfies $\boldsymbol{\xi}^* = \mathbf{X}\,\mathrm{softmax}(\beta \mathbf{X}^\top \boldsymbol{\xi}^*)$, so every fixed point lies in $\mathrm{conv}(\mathbf{X})$.
\end{enumerate}
\end{proposition}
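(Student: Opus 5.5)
All three parts follow directly from two facts: softmax returns convex weights, and the query enters $T$ only through the inner products $\mathbf{X}^\top \boldsymbol{\xi}$. I would prove (a), (b), (c) in order, using (a) to settle (c).

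For (a), write $\mathbf{p} = \mathrm{softmax}(\beta \mathbf{X}^\top \boldsymbol{\xi}) \in \mathbb{R}^N$. Each entry $p_i = \exp(\beta \mathbf{x}_i^\top\boldsymbol{\xi}) / \sum_j \exp(\beta \mathbf{x}_j^\top\boldsymbol{\xi})$ is strictly positive, and the entries sum to one. So $T(\boldsymbol{\xi}) = \mathbf{X}\mathbf{p} = \sum_i p_i \mathbf{x}_i$ is a convex combination of the columns $\mathbf{x}_i$, which puts it in $\mathrm{conv}(\mathbf{X})$. Every convex combination is in particular a linear combination, so $\mathrm{conv}(\mathbf{X}) \subseteq \mathrm{col}(\mathbf{X})$. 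As the text preceding the statement notes, the same argument applies verbatim with $\mathbf{V}$ in place of $\mathbf{X}$ for learned projections.

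For (b), I would use linearity of the logits: $\mathbf{X}^\top \boldsymbol{\xi} = \mathbf{X}^\top \boldsymbol{\xi}_\parallel + \mathbf{X}^\top \boldsymbol{\xi}_\perp$. Since $\boldsymbol{\xi}_\perp \perp \mathrm{col}(\mathbf{X})$, we have $\mathbf{x}_i^\top \boldsymbol{\xi}_\perp = 0$ for every column, hence $\mathbf{X}^\top \boldsymbol{\xi}_\perp = \mathbf{0}$. The softmax arguments for $\boldsymbol{\xi}$ and $\boldsymbol{\xi}_\parallel$ therefore coincide, and so do the outputs: $T(\boldsymbol{\xi}) = T(\boldsymbol{\xi}_\parallel)$. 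The ``erasure'' reading follows because any two queries $\boldsymbol{\xi}, \boldsymbol{\xi}'$ with the same $\boldsymbol{\xi}_\parallel$ but different $\boldsymbol{\xi}_\perp$ are mapped to the same point. The map $\boldsymbol{\xi} \mapsto T(\boldsymbol{\xi})$ thus factors through the orthogonal projection onto $\mathrm{col}(\mathbf{X})$, so no function of $T(\boldsymbol{\xi})$ can recover $\boldsymbol{\xi}_\perp$. By (a), later iterates also never leave $\mathrm{col}(\mathbf{X})$, so the lost component cannot reappear.

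For (c), a fixed point satisfies $\boldsymbol{\xi}^* = T(\boldsymbol{\xi}^*)$ by definition, which is the displayed equation. Applying (a) to the right-hand side gives $\boldsymbol{\xi}^* \in \mathrm{conv}(\mathbf{X})$.

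None of this is a genuine obstacle. The only point needing care is in (b): the decomposition is orthogonal with respect to the standard inner product, which is the one appearing in $\mathbf{X}^\top\boldsymbol{\xi}$, and it always exists because $\mathrm{col}(\mathbf{X})$ is a closed subspace of $\mathbb{R}^d$. The statement about distinct queries collapsing to the same fixed point under local contraction is not part of (a)--(c). I would defer it to the contraction results of \citet{ramsauer2021hopfield} as a remark rather than prove it here.
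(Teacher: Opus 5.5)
Your proof is correct and follows essentially the same route as the paper: convexity of the softmax weights for (a), the vanishing of $\mathbf{X}^\top\boldsymbol{\xi}_\perp$ so the logits coincide for (b), and applying (a) to $\boldsymbol{\xi}^* = T(\boldsymbol{\xi}^*)$ for (c). The paper likewise keeps the fixed-point collapse under local contraction out of the proof and defers it to a remark citing Ramsauer et al.
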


\begin{proof}
(a) Since $\mathrm{softmax}$ returns nonnegative weights summing to one, $T(\boldsymbol{\xi}) = \sum_{i=1}^N \alpha_i \mathbf{x}_i$ with $\alpha_i \geq 0, \sum_i \alpha_i = 1$, which is a convex combination of the columns of $\mathbf{X}$.

(b) The update $T(\boldsymbol{\xi})$ depends on $\boldsymbol{\xi}$ only through the score vector $\mathbf{X}^\top \boldsymbol{\xi}$.
Since $\mathbf{X}^\top \boldsymbol{\xi}_\perp = 0$ (each column of $\mathbf{X}$ is orthogonal to $\boldsymbol{\xi}_\perp$ by definition), we have $\mathbf{X}^\top \boldsymbol{\xi} = \mathbf{X}^\top \boldsymbol{\xi}_\parallel$, so $T(\boldsymbol{\xi}) = T(\boldsymbol{\xi}_\parallel)$.

(c) Setting $\boldsymbol{\xi} = \boldsymbol{\xi}^*$ in (a) gives $\boldsymbol{\xi}^* = T(\boldsymbol{\xi}^*) \in \mathrm{conv}(\mathbf{X})$. \qed
\end{proof}

\begin{remark}
\label{rem:erasure}
Proposition~\ref{prop:erasure} reveals a structural limitation of iterative Hopfield retrieval: after one step, the state is restricted to $\mathrm{conv}(\mathbf{X})$ and depends on the query only through the score vector $\mathbf{X}^\top \boldsymbol{\xi}$.
Any query information outside $\mathrm{col}(\mathbf{X})$ is immediately and irrecoverably discarded.
Further iteration compounds this: \citet{ramsauer2021hopfield} showed (their Lemma~A7) that under sufficient pattern separation, $T$ is a local contraction within a sphere around each stored pattern, with the Banach fixed-point theorem guaranteeing a unique fixed point per sphere.
Since these fixed points are determined by $\mathbf{X}$ and $\beta$ alone, distinct queries within the same local contraction region converge to the same output, erasing within-region query information.
This is consistent with the findings of \citet{smart2025incontext}, who showed that a single attention step can realize Bayes-optimal denoising behavior that outperforms both exact retrieval of a stored pattern and convergence to a potentially spurious local minimum.
Our experiments in Section~\ref{sec:negative} confirm this empirically: iterating trained attention to convergence degrades retrieval accuracy to near chance, while the one-step output retains useful query information.
\end{remark}

\subsection{Gradient-Boosted Attention as Gradient Boosting}

\begin{proposition}[MART equivalence]
\label{prop:mart}
Consider the squared prediction loss $L(\mathbf{x}, F) = \frac{1}{2}\|\mathbf{x} - F\|^2$, where $\mathbf{x}$ is the input and $F$ is the cumulative prediction from attention rounds $0, \ldots, m-1$.
Then the negative functional gradient of $L$ with respect to $F$ is
\begin{equation}
    -\nabla_F L(\mathbf{x}, F) = \mathbf{x} - F = \mathbf{r}_m,
\end{equation}
which is exactly the residual computed in line~5 of Algorithm~\ref{alg:boosted}.
Each correction round $\mathrm{Attn}_m(\mathbf{r}_m)$ fits a base learner (attention with separate projections) to this negative gradient, and the gated update $F_m = F_{m-1} + \mathbf{g}_m \odot \mathrm{Attn}_m(\mathbf{r}_m)$ is a shrinkage-regularized gradient boosting step.
\end{proposition}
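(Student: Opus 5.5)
The plan is to verify the two claims in order: first the gradient computation, then the match with Friedman's update~(\ref{eq:boosting}). The first step is an elementary calculus fact. Write $L(\mathbf{x},F) = \frac{1}{2}\sum_{j}(x_j - F_j)^2$ coordinatewise, over every batch, position and feature index. Differentiating gives $\partial L/\partial F_j = -(x_j - F_j)$, so $-\nabla_F L = \mathbf{x} - F$. When $F = F_{m-1}$ is the cumulative output after rounds $0,\ldots,m-1$, this is exactly $\mathbf{r}_m$ as computed in line~5 of Algorithm~\ref{alg:boosted}. Here ``functional'' gradient just means we differentiate with respect to the prediction values $F$ at each input, not with respect to any parameters. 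For squared loss this is identical to the ordinary gradient, as noted after Eq.~(\ref{eq:boosting}).

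Next, I would line the algorithm up with~(\ref{eq:boosting}) term by term, following Table~\ref{tab:mart}:
\begin{itemize}
\item The target $y$ is the input $\mathbf{x}$.
\item The initial estimate is $F_0 = f_0 = \mathrm{Attn}_0(\mathbf{x})$.
\item The pseudo-residual is $r_m = \mathbf{r}_m$, by the computation above.
\item The base learner is $f_m = \mathrm{Attn}_m$. It is a parametric function whose only input is $\mathbf{r}_m$ and which has its own projections $\mathbf{W}_Q^{(m)},\mathbf{W}_K^{(m)},\mathbf{W}_V^{(m)}$.
\item The update $F_m = F_{m-1} + \mathbf{g}_m \odot \mathbf{c}_m$ has the form $F_{m-1} + \eta_m f_m$.
\end{itemize}
For the shrinkage, I would note that $\mathbf{g}_m \in [0,1]^d$, and that replacing $\mathbf{g}_m$ by a constant scalar $\eta_m\mathbf{1}$ recovers Friedman's step exactly. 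The per-dimension gate is then a diagonal, input-dependent preconditioning of the step. Such a step is still a descent direction for $L$, since $\langle \mathbf{g}_m\odot \mathbf{r}_m, \mathbf{r}_m\rangle \ge 0$ whenever the correction is aligned with the residual.

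The main obstacle is the phrase ``fits a base learner to this negative gradient''. In MART, $f_m$ is fitted explicitly, for example $f_m = \arg\min_f \sum \|r_m - f\|^2$. In Algorithm~\ref{alg:boosted}, by contrast, $\mathrm{Attn}_m$ is never trained against $\mathbf{r}_m$ within the layer; its weights are learned end-to-end from the task loss. I would therefore state the correspondence structurally. The inputs, the residual signal, the additive update and the shrinkage all coincide exactly. The ``fitting'' is only the statement that $f_m$ is a learned function of the pseudo-residual, trained so that the overall model improves. I would add an explicit remark that the internal reconstruction objective is what defines the residual and the boosting structure, not the training criterion for $\mathrm{Attn}_m$. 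This keeps the proposition true as stated, with ``fits'' read in this sense, without claiming that the correction minimizes $\|\mathbf{r}_m - \mathbf{c}_m\|$.
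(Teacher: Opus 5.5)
Your proposal is correct and follows the paper's proof: read the gradient directly off the squared loss, then identify Algorithm~\ref{alg:boosted} with the boosting update~(\ref{eq:boosting}) term by term, with the gate playing the role of $\eta_m$. Your caveat that $\mathrm{Attn}_m$ is trained end-to-end rather than literally fitted to $\mathbf{r}_m$ is more careful than the paper's proof, which flags only the scalar-line-search versus learned-per-dimension-gate difference. Your descent-direction aside is inessential, and as written it covers only $\mathbf{c}_m\propto\mathbf{r}_m$: the relevant quantity is $\langle \mathbf{g}_m\odot\mathbf{c}_m,\mathbf{r}_m\rangle$, which a per-dimension gate can make negative even when $\langle \mathbf{c}_m,\mathbf{r}_m\rangle>0$.
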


\begin{proof}
The gradient is immediate from the loss definition.
The structural correspondence between Algorithm~\ref{alg:boosted} and the boosting update~(\ref{eq:boosting}) is exact when the loss is squared and the gate $\mathbf{g}_m$ plays the role of $\eta_m$.
The only difference is that $\eta_m$ in MART is a scalar optimized via line search, while $\mathbf{g}_m$ is a per-dimension function of the current state, learned jointly with the rest of the model by backpropagation. \qed
\end{proof}

\begin{remark}
\citet{cheng2024transformers} proved that the $L$-layer transformer implements $L$ steps of functional gradient descent in a reproducing kernel Hilbert space (RKHS), but did not connect this to the classical boosting literature or to Friedman's MART.
Our construction makes this connection explicit and operates at a finer granularity: boosting within a single layer rather than across layers.
\end{remark}

\subsection{Why Separate Projections Are Necessary}

\citet{abdullaev2025twicing} proposed Twicing Attention, which smooths the residual $\mathbf{V} - \mathbf{A}\mathbf{V}$ using the \emph{same} attention matrix $\mathbf{A}$, yielding the output $(2\mathbf{A} - \mathbf{A}^2)\mathbf{V}$.
We identify two structural limitations of this shared-projection approach that gradient-boosted attention overcomes.

\begin{proposition}[Limitations of shared-projection correction]
\label{prop:separate}
Let $\mathbf{A} = \mathrm{softmax}(\mathbf{Q}\mathbf{K}^\top / \sqrt{d_h}) \in \mathbb{R}^{N \times N}$ and $\mathbf{V} = \mathbf{W}_V^{(0)}\mathbf{x} \in \mathbb{R}^{N \times d}$.
\begin{enumerate}
    \item[(a)] \emph{Row-span confinement.}
    For any polynomial $p$, each row of $p(\mathbf{A})\mathbf{V}$ is a linear combination of the rows of $\mathbf{V}$.
    In particular, the Twicing output $(2\mathbf{A} - \mathbf{A}^2)\mathbf{V}$---and any higher-degree polynomial extension---is confined to $\mathrm{rowspan}(\mathbf{V})$, the subspace spanned by the original value vectors.

    \item[(b)] \emph{Fixed correction.}
    Twicing's correction $\mathbf{A}(\mathbf{I} - \mathbf{A})\mathbf{V}$ is uniquely determined by $\mathbf{A}$ and $\mathbf{V}$ with no free parameters: in the MART framework of Proposition~\ref{prop:mart}, the weak learner is not fitted to the residual.

    \item[(c)] \emph{Gradient-boosted attention escapes both constraints.}
    The correction $\mathbf{g} \odot \mathbf{A}'\mathbf{V}'$, where $\mathbf{A}' = \mathrm{softmax}(\mathbf{Q}'\mathbf{K}'{}^\top / \sqrt{d_h})$ with $\mathbf{Q}' = \mathbf{W}_Q^{(1)}\mathbf{r}$, $\mathbf{K}' = \mathbf{W}_K^{(1)}\mathbf{r}$, $\mathbf{V}' = \mathbf{W}_V^{(1)}\mathbf{r}$, produces value vectors generically outside $\mathrm{rowspan}(\mathbf{V})$, and the projections are fitted to the residual by backpropagation.
\end{enumerate}
\end{proposition}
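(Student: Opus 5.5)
The plan is to treat the three parts separately, since each is essentially structural. For (a), expand $p(\mathbf{A}) = \sum_{k=0}^{D} c_k \mathbf{A}^k$ and note that $p(\mathbf{A})\mathbf{V} = \mathbf{P}\mathbf{V}$ with $\mathbf{P} = p(\mathbf{A}) \in \mathbb{R}^{N\times N}$. The $i$-th row of $\mathbf{P}\mathbf{V}$ is then $\sum_j P_{ij}\mathbf{v}_j^\top$, a linear combination of the rows of $\mathbf{V}$. This holds for any left-multiplying matrix, so the polynomial structure, and even the softmax structure of $\mathbf{A}$, is irrelevant. Specializing to $p(t) = 2t - t^2$ gives the Twicing claim, and any higher-degree extension (for example, iterated twicing $\mathbf{I} - (\mathbf{I}-\mathbf{A})^k$) is again a polynomial in $\mathbf{A}$. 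One remark worth adding: since $\mathbf{V} = \mathbf{W}_V^{(0)}\mathbf{x}$, the row span is contained in a subspace of dimension at most $\min(N, d_h)$ determined by $\mathbf{W}_V^{(0)}$, so the confinement is genuinely restrictive whenever the residual carries information in the complementary directions.

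For (b), write the Twicing output as $\mathbf{A}\mathbf{V} + \mathbf{A}(\mathbf{V} - \mathbf{A}\mathbf{V})$. Identify $\mathbf{A}\mathbf{V}$ as $F_0$ and $\mathbf{R} = (\mathbf{I}-\mathbf{A})\mathbf{V}$ as the residual in value space. The correction is then $\mathbf{A}\mathbf{R}$, a fixed linear smoother applied to $\mathbf{R}$, with $\mathbf{A}$ computed from $\mathbf{Q},\mathbf{K}$ of round $0$ and no dependence on $\mathbf{R}$ beyond linear smoothing. I would formalize ``not fitted'' to mean that, in the notation of Proposition~\ref{prop:mart}, the base learner $f_1$ belongs to the singleton class $\{\mathbf{R}\mapsto \mathbf{A}\mathbf{R}\}$. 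With $\mathbf{A}$ held fixed, no parameter is chosen to reduce $\frac12\|\mathbf{R} - f_1(\mathbf{R})\|^2$, and the shrinkage is fixed at $\eta_1 = 1$. This part is essentially definitional, and the proof should just state the identification cleanly.

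For (c), the work is in making ``generically outside $\mathrm{rowspan}(\mathbf{V})$'' precise. Rows of $\mathbf{A}'\mathbf{V}'$ lie in the row span of $\mathbf{V}' = \mathbf{W}_V^{(1)}\mathbf{r}$, and $\mathbf{A}'$ is row-stochastic with strictly positive entries, hence nonzero rows. The strategy is therefore to show that $\mathrm{rowspan}(\mathbf{W}_V^{(1)}\mathbf{r}) \not\subseteq \mathrm{rowspan}(\mathbf{V})$ for $\mathbf{W}_V^{(1)}$ outside a measure-zero set. I would use the fact that the set of $\mathbf{W}_V^{(1)}$ for which some output row lies in a fixed proper subspace $S$ is the zero set of a nonzero real-analytic function of the parameters. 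A convenient choice is the sum of squared norms of the projections of the output rows onto $S^\perp$. To show this function is not identically zero, I would exhibit one parameter choice: pick $\mathbf{W}_V^{(1)}$ mapping a nonzero direction of $\mathbf{r}$ into $S^\perp$. This needs $\mathbf{r}\neq 0$ and $\mathrm{rowspan}(\mathbf{V})$ to be a proper subspace of the output space, the latter holding whenever $\mathrm{rank}\,\mathbf{W}_V^{(0)} < d$ or $N < d$.

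The gate then multiplies coordinatewise by entries of $\sigma(\cdot)\in(0,1)$. Elementwise gating can itself leave a subspace, so it only helps; a component in $S^\perp$ cannot be annihilated because each gate entry is strictly positive, though it can be rotated relative to $S$. I would handle this by applying the genericity argument to the gated rows directly, which is again analytic in the parameters. The fitting claim follows from the gradient of the end-to-end loss flowing into $\mathbf{W}_Q^{(1)},\mathbf{W}_K^{(1)},\mathbf{W}_V^{(1)}$ through $\mathbf{r}$.

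I expect the main obstacle to be the dimension bookkeeping. With multiple heads and $d_h < d$, each head's $\mathbf{V}$ has columns in $\mathbb{R}^{d_h}$ and the output projection mixes them, so I would fix the convention of working per head before $\mathbf{W}_{\mathrm{out}}$, where $\mathrm{rowspan}(\mathbf{V})$ is proper.
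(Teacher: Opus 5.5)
You handle (a) and (b) as the paper does. In (a) you note directly that $p(\mathbf{A})$ is an $N\times N$ left multiplier instead of inducting on $\mathbf{A}^k$, which is cleaner. For (c), the zero set of a real-analytic function plus an explicit witness is the right way to make ``generically'' precise; the paper only asserts that different projections of different inputs give generically distinct row spans.

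\textbf{The gap.} It sits in the step you defer to bookkeeping. Your argument needs $S=\mathrm{rowspan}(\mathbf{V})$ to be a proper subspace of the space the correction lives in, and working per head does not give this. Per head, the round-0 values are $N$ vectors in $\mathbb{R}^{d_h}$. The round-1 values $\mathbf{W}_V^{(1)}\mathbf{r}$ of that head land in the same $\mathbb{R}^{d_h}$, since Algorithm~\ref{alg:boosted} adds each head's correction into that head's block of $F$. The per-head condition $\mathrm{rank}\,\mathbf{W}_V^{(0)}\le d_h<d$ holds automatically but is the wrong condition; what matters is $\mathrm{rank}\,\mathbf{V}<d_h$. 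Take generic $\mathbf{x}$, full-rank $\mathbf{W}_V^{(0)}$ and $N\ge d_h$, which is exactly the paper's setting $N=256$, $d_h=64$. Then $S=\mathbb{R}^{d_h}$ and $S^\perp=\{0\}$, so your witness cannot be built. Worse, the conclusion is false there, because no vector lies outside $S$. For the same reason, your remark that confinement to dimension $\min(N,d_h)$ is ``genuinely restrictive'' fails in that regime. The paper's proof has the same hole: two spans that both equal $\mathbb{R}^{d_h}$ are generically equal, not distinct. So this is a defect of the statement that your proof has to repair.

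\textbf{Repair.} Either state the hypothesis $\mathrm{rank}\,\mathbf{V}<d_h$ (e.g.\ $N<d_h$), or use the causal mask. Since $\mathbf{A}$ is lower triangular, so is $p(\mathbf{A})$, and row $t$ of $p(\mathbf{A})\mathbf{V}$ lies in the prefix span $\mathrm{span}(\mathbf{v}_1,\dots,\mathbf{v}_t)$. That is a sharper form of (a), and the prefix span is proper for $t<d_h$. Your genericity argument then runs position by position.

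\textbf{Smaller points.}
\begin{enumerate}
\item The sum over rows of the squared norms of their $S^\perp$-components vanishes iff \emph{all} rows lie in $S$. So it certifies ``not all rows in $S$'', not ``no row in $S$''; use a product over rows if you want the stronger claim.
\item The witness needs some attention-averaged residual $\bar{\mathbf{r}}_i=\sum_j A'_{ij}\mathbf{r}_j$ to be nonzero, not merely $\mathbf{r}\neq 0$: with uniform $\mathbf{A}'$ and zero-mean $\mathbf{r}$ the correction vanishes. Choose $\mathbf{W}_Q^{(1)},\mathbf{W}_K^{(1)}$ that sharpen attention, or use the first causal row. Then take $\mathbf{W}_V^{(1)}=\mathbf{u}\,\bar{\mathbf{r}}_i^\top$ with $0\neq\mathbf{u}\in S^\perp$, which gives output row $\mathbf{u}\|\bar{\mathbf{r}}_i\|^2\notin S$.
\item For the gate, zero gate weights give $\mathbf{g}=\tfrac12\mathbf{1}$. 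The same witness then works, and the analytic argument covers the gated rows directly.
\end{enumerate}
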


\begin{proof}
(a) $p(\mathbf{A})\mathbf{V} = \sum_k c_k \mathbf{A}^k\mathbf{V}$.
Each row of $\mathbf{A}\mathbf{V}$ is $\sum_j A_{ij}\mathbf{v}_j$, a linear combination of value vectors; by induction the same holds for $\mathbf{A}^k\mathbf{V}$, hence for any $p(\mathbf{A})\mathbf{V}$.

(b) The output $(2\mathbf{A} - \mathbf{A}^2)\mathbf{V}$ is a deterministic function of $\mathbf{A}$ and $\mathbf{V}$; the correction introduces no learnable parameters.

(c) Since $\mathbf{V}' = \mathbf{W}_V^{(1)}\mathbf{r}$ and $\mathbf{V} = \mathbf{W}_V^{(0)}\mathbf{x}$ use different projections of different inputs ($\mathbf{r} = \mathbf{x} - F_0 \neq \mathbf{x}$ in general), $\mathrm{rowspan}(\mathbf{V}')$ and $\mathrm{rowspan}(\mathbf{V})$ are generically distinct.
The projections $\mathbf{W}_Q^{(1)}, \mathbf{W}_K^{(1)}, \mathbf{W}_V^{(1)}$ are trained jointly, fitting the weak learner to the residual signal. \qed
\end{proof}

\begin{remark}
\citet{abdullaev2025twicing} exploit the spectral structure of part~(a) positively: they show the polynomial $\tilde{p}(\lambda) = 2\lambda - \lambda^2$ slows eigencapacity decay from $O(n^{-1})$ to $O(n^{-1/2})$ across layers, mitigating representation collapse without additional parameters.
Gradient-boosted attention addresses a complementary problem: rather than optimizing the spectral filter within $\mathrm{rowspan}(\mathbf{V})$, it introduces independent projections that fit the correction to the prediction residual, producing value vectors that can represent directions the original $\mathbf{V}$ cannot express.
\end{remark}

\subsection{Normalization Placement and the Additive Structure}
\label{sec:normalization}

Gradient boosting builds an additive model: the cumulative prediction $F_M = F_0 + \sum_{m=1}^{M} \eta_m f_m$ must preserve each correction term without distortion.
The LayerNorm Jacobian and its properties are well established \citep{xiong2020layer}; we restate the relevant facts here to make explicit their consequence for gradient-boosted attention.
Pre-LN and Post-LN differ fundamentally in whether they preserve the additive structure across layers.

\begin{proposition}[Normalization placement and additive preservation]
\label{prop:normalization}
Consider a transformer layer with sublayer output $\mathbf{f} \in \mathbb{R}^d$ (e.g., the output of gradient-boosted attention).
\begin{enumerate}
    \item[(a)] Under Pre-LN placement, $\mathbf{h}_{l+1} = \mathbf{h}_l + \mathbf{f}(\mathrm{LN}(\mathbf{h}_l))$.
    The Jacobian of $\mathbf{h}_{l+1}$ with respect to the sublayer output $\mathbf{f}$ is the identity:
    \begin{equation}
    \frac{\partial \mathbf{h}_{l+1}}{\partial \mathbf{f}} = \mathbf{I}.
    \end{equation}
    Each component of $\mathbf{f}$---including any gated correction $\mathbf{g}_m \odot \mathbf{c}_m$---contributes independently and without distortion to the residual stream.

    \item[(b)] Under Post-LN placement, $\mathbf{h}_{l+1} = \mathrm{LN}(\mathbf{h}_l + \mathbf{f}(\mathbf{h}_l))$.
    Let $\mathbf{u} = \mathbf{h}_l + \mathbf{f}$, $\mu = \frac{1}{d}\sum_j u_j$, $\sigma^2 = \frac{1}{d}\sum_j (u_j - \mu)^2$, and $\hat{\mathbf{u}} = (\mathbf{u} - \mu\mathbf{1})/\sigma$.
    The Jacobian of $\mathbf{h}_{l+1}$ with respect to $\mathbf{f}$ is the well-known LayerNorm Jacobian \citep{xiong2020layer}:
    \begin{equation}
    \label{eq:ln_jacobian}
    \frac{\partial \mathbf{h}_{l+1}}{\partial \mathbf{f}} = \frac{\mathrm{diag}(\boldsymbol{\gamma})}{\sigma}\left(\mathbf{I} - \frac{1}{d}\mathbf{1}\mathbf{1}^\top - \frac{1}{d}\hat{\mathbf{u}}\hat{\mathbf{u}}^\top\right),
    \end{equation}
    where $\boldsymbol{\gamma} \in \mathbb{R}^d$ is the learned scale parameter of LayerNorm.
    This matrix has rank $d - 2$: it annihilates the constant vector $\mathbf{1}$ and the standardized activation $\hat{\mathbf{u}}$, and it couples all surviving dimensions through the outer-product terms.
\end{enumerate}
\end{proposition}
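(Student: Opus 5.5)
Part~(a) is immediate. With $\mathbf{h}_l$ held fixed and the sublayer output $\mathbf{f}$ treated as an independent variable, $\mathbf{h}_{l+1} = \mathbf{h}_l + \mathbf{f}$ is affine in $\mathbf{f}$ with identity linear part. So $\partial \mathbf{h}_{l+1}/\partial \mathbf{f} = \mathbf{I}$. Combining this with the additive form $F_M = F_0 + \sum_m \mathbf{g}_m \odot \mathbf{c}_m$ from Algorithm~\ref{alg:boosted} (with $\mathbf{W}_{\mathrm{out}}$ absorbed into $\mathbf{f}$, which is linear), each gated correction enters $\mathbf{h}_{l+1}$ as a separate summand. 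Its $j$-th coordinate is scaled only by $g_{m,j}$.

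For part~(b), I would differentiate $\mathrm{LN}(\mathbf{u}) = \boldsymbol{\gamma}\odot\hat{\mathbf{u}} + \boldsymbol{\beta}$ with respect to $\mathbf{u}$ and then apply the chain rule with $\partial\mathbf{u}/\partial\mathbf{f} = \mathbf{I}$. The ingredients, computed in order, are:
\begin{itemize}
\item $\partial \mu/\partial \mathbf{u} = \frac1d\mathbf{1}^\top$, so the centering map $\mathbf{u}\mapsto \mathbf{u}-\mu\mathbf{1}$ has Jacobian $\mathbf{P} = \mathbf{I} - \frac1d\mathbf{1}\mathbf{1}^\top$;
\item $\partial\sigma/\partial\mathbf{u} = \frac{1}{d\sigma}(\mathbf{u}-\mu\mathbf{1})^\top = \frac1d\hat{\mathbf{u}}^\top$, using $\mathbf{P}(\mathbf{u}-\mu\mathbf{1}) = \mathbf{u}-\mu\mathbf{1}$;
\item the quotient rule on $\hat{\mathbf{u}} = (\mathbf{u}-\mu\mathbf{1})/\sigma$, giving $\frac{1}{\sigma}\mathbf{P} - \frac{1}{\sigma^2}(\mathbf{u}-\mu\mathbf{1})\frac{1}{d}\hat{\mathbf{u}}^\top = \frac1\sigma\bigl(\mathbf{P} - \frac1d\hat{\mathbf{u}}\hat{\mathbf{u}}^\top\bigr)$.
\end{itemize}
Left-multiplying by $\mathrm{diag}(\boldsymbol{\gamma})$ then yields (\ref{eq:ln_jacobian}).

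For the rank claim, write $\mathbf{J}_0 = \mathbf{I} - \frac1d\mathbf{1}\mathbf{1}^\top - \frac1d\hat{\mathbf{u}}\hat{\mathbf{u}}^\top$. By construction $\mathbf{1}^\top\hat{\mathbf{u}} = 0$ and $\|\hat{\mathbf{u}}\|^2 = d$, so $\mathbf{1}/\sqrt d$ and $\hat{\mathbf{u}}/\sqrt d$ are orthonormal. Hence $\mathbf{J}_0$ is exactly the orthogonal projector onto the complement of $\mathrm{span}\{\mathbf{1},\hat{\mathbf{u}}\}$. It annihilates $\mathbf{1}$ and $\hat{\mathbf{u}}$ and has rank $d-2$. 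Since $\mathrm{diag}(\boldsymbol{\gamma})/\sigma$ is invertible when all $\gamma_j\neq0$ and $\sigma>0$, the full Jacobian also has rank $d-2$.

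The main obstacle is the nondegeneracy hypotheses, which I would state explicitly rather than hide:
\begin{itemize}
\item $\sigma>0$ is needed so that $\hat{\mathbf{u}}$ is defined, which requires $\mathbf{u}$ to be nonconstant;
\item $\gamma_j\neq 0$ for all $j$ is needed for the rank to stay at $d-2$.
\end{itemize}
Without the LayerNorm $\epsilon$, these facts make $\mathbf{1}\perp\hat{\mathbf{u}}$ and $\|\hat{\mathbf{u}}\|^2=d$ exact. With $\epsilon$ the norm is only approximately $d$, and the rank statement becomes generic rather than exact, so I would note that caveat.

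Finally, the coupling claim follows because $\mathbf{J}_0$ has generically nonzero off-diagonal entries $-\frac1d(1+\hat u_i\hat u_j)$. A perturbation of $\mathbf{f}$ in one coordinate, such as a single gated dimension $g_{m,j}c_{m,j}$, therefore moves every output coordinate, with data-dependent scale $\gamma_j/\sigma$. This contrasts with the identity in part~(a).
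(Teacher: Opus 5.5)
Your proof is correct and follows essentially the same route as the paper: direct differentiation of LayerNorm (you in matrix form via the centering projector and the quotient rule, the paper entrywise), then observing that $\mathbf{1}$ and $\hat{\mathbf{u}}$ are orthogonal null directions whose orthogonal complement is fixed, which you package neatly as an orthogonal projector. Your explicit hypotheses $\sigma>0$ and $\gamma_j\neq 0$ tighten the paper, which proves rank $d-2$ only for the bracketed matrix and passes it to the full Jacobian without comment. Two minor slips appear in your asides. Output coordinate $i$ is scaled by $\gamma_i/\sigma$, not $\gamma_j/\sigma$. With $\epsilon>0$ the rank is exactly $d-1$, not generically $d-2$, because the centered direction $\mathbf{u}-\mu\mathbf{1}$ is scaled by $\epsilon/(\sigma^2+\epsilon)$ rather than annihilated.
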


\begin{proof}
(a) Since $\mathbf{h}_{l+1} = \mathbf{h}_l + \mathbf{f}$ and $\mathbf{h}_l$ does not depend on $\mathbf{f}$, the Jacobian is immediate.

(b) The $i$-th component of $\mathrm{LN}(\mathbf{u})$ is $\gamma_i(u_i - \mu)/\sigma + b_i$, where $b_i$ is the learned bias.
Using $\partial \mu / \partial u_j = 1/d$ and $\partial \sigma / \partial u_j = (u_j - \mu)/(d\sigma)$:
\[
\frac{\partial [\mathrm{LN}(\mathbf{u})]_i}{\partial u_j} = \frac{\gamma_i}{\sigma}\left(\delta_{ij} - \frac{1}{d} - \frac{\hat{u}_i \hat{u}_j}{d}\right).
\]
Since $\mathbf{u} = \mathbf{h}_l + \mathbf{f}$ with $\partial u_j / \partial f_j = 1$, this gives equation~(\ref{eq:ln_jacobian}).
For the rank: $\hat{\mathbf{u}}$ is centered ($\mathbf{1}^\top \hat{\mathbf{u}} = 0$), so $\mathbf{1}$ and $\hat{\mathbf{u}}$ are linearly independent.
Both are null vectors of $\mathbf{I} - \frac{1}{d}\mathbf{1}\mathbf{1}^\top - \frac{1}{d}\hat{\mathbf{u}}\hat{\mathbf{u}}^\top$ (using $\|\hat{\mathbf{u}}\|^2 = d$), and any $\mathbf{v} \perp \{\mathbf{1}, \hat{\mathbf{u}}\}$ satisfies $(\mathbf{I} - \frac{1}{d}\mathbf{1}\mathbf{1}^\top - \frac{1}{d}\hat{\mathbf{u}}\hat{\mathbf{u}}^\top)\mathbf{v} = \mathbf{v}$, giving rank exactly $d - 2$. \qed
\end{proof}

\begin{remark}
\label{rem:normalization}
Under Pre-LN, Proposition~\ref{prop:normalization}(a) ensures that the gated boosting correction $\mathbf{g}_m \odot \mathbf{c}_m$ from Algorithm~\ref{alg:boosted} enters the residual stream unchanged, preserving the additive structure $F_m = F_{m-1} + \mathbf{g}_m \odot \mathbf{c}_m$ that the MART correspondence (Proposition~\ref{prop:mart}) requires.
More broadly, the residual stream after $L$ Pre-LN layers decomposes as $\mathbf{h}_L = \mathbf{h}_0 + \sum_{l} \mathbf{f}_l$, matching the additive form of gradient boosting across layers \citep{cheng2024transformers}.
Under Post-LN, the rank-$(d{-}2)$ Jacobian projects out two degrees of freedom (the mean direction $\mathbf{1}$ and the variance-aligned direction $\hat{\mathbf{u}}$), rescales the remainder by the data-dependent factor $\boldsymbol{\gamma}/\sigma$, and introduces cross-dimensional coupling through $\hat{\mathbf{u}}\hat{\mathbf{u}}^\top$.
The per-dimension gate $\mathbf{g}_m$ was learned to apply element-wise shrinkage, but this coupling mixes the corrections across dimensions, undermining the independent per-dimension control.
We verify empirically in Section~\ref{sec:experiments} that gradient-boosted attention improves perplexity by $6.0\%$ under Pre-LN but \emph{degrades} it by $9.6\%$ under Post-LN.
This connection is consistent with the identity-mapping principle of \citet{he2016identity}, who showed in ResNets that placing normalization inside the residual branch (preserving a clean additive shortcut) improves both gradient flow and final accuracy---the same structural requirement that gradient boosting imposes.
\end{remark}

\section{Why Iterating Attention Fails: Negative Results}
\label{sec:negative}

Before presenting our main experiments, we summarize the empirical evidence that motivated the gradient-boosted attention design.
These negative results complement Proposition~\ref{prop:erasure} by showing that the failure of iterative attention persists across the training methods and architectural modifications we tested.

We trained attention models on a synthetic pattern retrieval task: $K$ unit-normalized patterns in $\mathbb{R}^d$, queries corrupted by additive Gaussian noise with standard deviation $\sigma$, evaluated by nearest-pattern retrieval accuracy (see Appendix~\ref{app:denoising} for details).
Table~\ref{tab:negative} summarizes results across six configurations.

\begin{table}[t]
\centering
\caption{Retrieval accuracy (\%) on the synthetic pattern retrieval task. The Bayes-optimal predictor (softmax attention with $\beta = 1/\sigma^2$, identity projections) provides an analytical ceiling. Both converged variants---whether trained by backpropagation through 20 iterations or by DEQ implicit differentiation---collapse to near chance, while one-step attention approaches the Bayes-optimal ceiling.}
\label{tab:negative}
\begin{tabular}{@{}lcccccc@{}}
\toprule
& \multicolumn{2}{c}{$d{=}16, K{=}4$} & \multicolumn{2}{c}{$d{=}32, K{=}8$} & \multicolumn{2}{c}{$d{=}64, K{=}16$} \\
\cmidrule(lr){2-3} \cmidrule(lr){4-5} \cmidrule(lr){6-7}
& $\sigma{=}0.5$ & $\sigma{=}0.8$ & $\sigma{=}0.5$ & $\sigma{=}0.8$ & $\sigma{=}0.5$ & $\sigma{=}0.8$ \\
\midrule
Random chance & 25.0 & 25.0 & 12.5 & 12.5 & 6.3 & 6.3 \\
Bayes optimal & 80.7 & 61.9 & 68.8 & 45.3 & 58.1 & 32.6 \\
One-step (trained) & 71.1 & 55.0 & 53.1 & 36.4 & 46.5 & 26.1 \\
Converged (trained, backprop) & 25.2 & 25.2 & 12.1 & 12.1 & 6.7 & 6.7 \\
Converged (trained, DEQ) & 25.2 & 25.2 & 12.1 & 12.1 & 6.7 & 6.7 \\
\bottomrule
\end{tabular}
\end{table}

\paragraph{Iterating attention destroys accuracy.}
Across all six configurations in Table~\ref{tab:negative}, both converged variants---trained with backpropagation through 20 unrolled iterations and with implicit differentiation via Deep Equilibrium Models \citep{bai2019deep}---achieve accuracy indistinguishable from random chance (e.g., 6.7\% vs.\ 6.3\% chance for $K{=}16$), while one-step attention reaches 26--71\% depending on difficulty, approaching the analytical Bayes-optimal ceiling.
This is consistent with Proposition~\ref{prop:erasure} and the local-contraction analysis of \citet{ramsauer2021hopfield}: queries in the same contraction region converge to the same fixed point regardless of their initial position.

\paragraph{The failure is structural, not a training artifact.}
One might suspect vanishing gradients through many iteration steps cause the converged path to fail.
We rule this out with two controls: (i)~backpropagation through the unrolled iteration graph, which provides gradients but may suffer from vanishing signal over 20 steps, and (ii)~Deep Equilibrium Models, which compute exact gradients at the fixed point via the implicit function theorem, bypassing the iteration graph entirely.
Both achieve identical random-chance accuracy (Table~\ref{tab:negative}), confirming that the information loss is a consequence of the contraction dynamics themselves, not of how gradients are computed.

\paragraph{Learned routing gates learn trivial strategies.}
We trained routing gates with five different feature sets (both outputs, their difference, scalar divergence, attention entropy) to decide when to trust the converged path.
All gates learned to always select the one-step output, achieving identical accuracy to the one-step baseline.
The converged path contains no useful complementary signal.

\section{Experiments}
\label{sec:experiments}

\subsection{WikiText-103 Language Modeling}

\paragraph{Setup.}
We train small transformer language models from scratch and compare four attention configurations: (1)~\textbf{Standard} causal multi-head attention with $d = 256$, 4 layers, and 4 heads (7.4M parameters); (2)~\textbf{Twicing} Attention \citep{abdullaev2025twicing} with the same architecture, applying the correction $(2\mathbf{A} - \mathbf{A}^2)\mathbf{V}$ at no additional parameter cost (7.4M parameters); (3)~a \textbf{parameter-fair} standard model with $d = 288$, matching the parameter count of the boosted model (8.8M parameters); and (4)~\textbf{gradient-boosted attention} ($M = 2$) with separate QKV projections per round and a per-dimension sigmoid gate (8.7M parameters).
All models use Pre-LN placement \citep{xiong2020layer}, byte-pair encoding (BPE) tokenization (16K vocabulary), sequence length 256, AdamW optimizer with learning rate $3 \times 10^{-4}$, cosine schedule with 1500 warmup steps, weight tying, and gradient clipping at 1.0.
We evaluate on two benchmarks: (i) a \textbf{10M-token subset} of WikiText-103 \citep{merity2016pointer} (the full corpus contains ${\sim}103$M tokens), and (ii) a \textbf{10M-token subset} of OpenWebText \citep{gokaslan2019openwebtext}, an open reproduction of WebText containing Reddit-upvoted web content.
Each benchmark uses a separately trained BPE tokenizer; perplexity values are comparable only within the same dataset.
We train for 15 epochs and report test perplexity averaged over 2 random seeds.
Full hyperparameters are listed in Appendix~\ref{app:hyperparams}.

\paragraph{Results.}

\begin{table}[t]
\centering
\caption{Test perplexity (lower is better) on two benchmarks. All models use Pre-LN. Perplexity values are not comparable across datasets (different tokenizers). Gradient-boosted attention outperforms all baselines on both benchmarks, including a wider model with matched parameter count.}
\label{tab:main_results}
\begin{tabular}{@{}lccc@{}}
\toprule
& & \multicolumn{2}{c}{\textbf{Test Perplexity}} \\
\cmidrule(lr){3-4}
\textbf{Model} & \textbf{Params} & \textbf{WikiText-103} & \textbf{OpenWebText} \\
\midrule
Standard ($d{=}256$)              & 7.4M & $72.2 \pm 0.3$ & $114.9 \pm 0.5$ \\
Twicing ($d{=}256$)               & 7.4M & $69.6 \pm 0.1$ & $110.7 \pm 0.3$ \\
Standard ($d{=}288$, param-fair)  & 8.8M & $69.0 \pm 0.1$ & $110.2 \pm 0.5$ \\
Gradient-boosted ($M{=}2$) & 8.7M & $\mathbf{67.9 \pm 0.1}$ & $\mathbf{108.5 \pm 0.9}$ \\
\bottomrule
\end{tabular}
\end{table}

Table~\ref{tab:main_results} shows the results.
Gradient-boosted attention achieves $67.9$ perplexity on WikiText-103 and $108.5$ on OpenWebText, the best on both benchmarks.
Relative to standard attention, this is a $6.0\%$ reduction on WikiText-103 and $5.6\%$ on OpenWebText.

Three comparisons isolate the source of improvement.
First, Twicing and our method both correct the initial attention estimate, but Twicing uses a fixed shared-kernel correction $(2\mathbf{A} - \mathbf{A}^2)\mathbf{V}$ while gradient-boosted attention uses separate projections per round with a learned gate.
The consistent gap across both datasets ($-1.7$ and $-2.2$ points, respectively) validates Proposition~\ref{prop:separate}: independent projections can fit residual structure that a shared kernel cannot.
Second, a wider standard model ($d{=}288$, $8.8$M parameters) slightly exceeds the boosted model in parameter count yet falls short on both benchmarks, isolating the contribution of the residual-fitting architecture from additional capacity alone.

\begin{figure}[t]
\centering
\includegraphics[width=\textwidth]{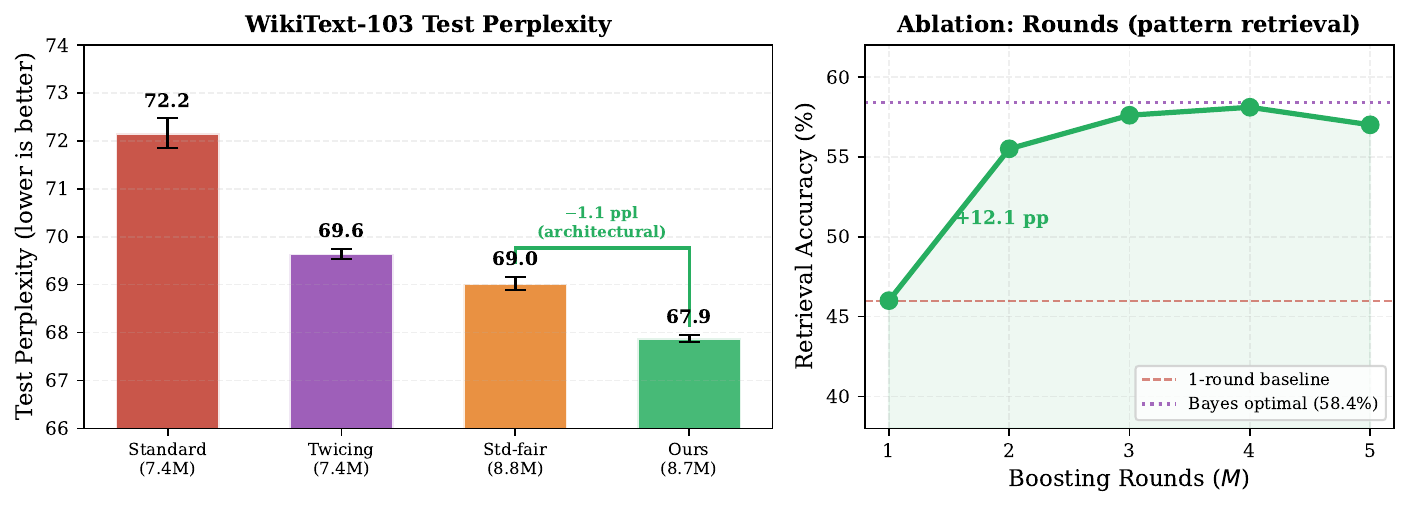}
\caption{\textbf{Left:} WikiText-103 test perplexity (zoomed axis). Gradient-boosted attention outperforms all baselines including Twicing and a parameter-matched wider model. \textbf{Right:} Retrieval accuracy on the synthetic pattern retrieval task as a function of boosting rounds. The dotted line marks the Bayes-optimal ceiling (58.1\%). Four rounds nearly match it (58.1\%); the jump from 1 to 2 rounds captures most of the improvement.}
\label{fig:results}
\end{figure}

\subsection{Normalization Ablation}
\label{sec:norm_ablation}

Proposition~\ref{prop:normalization} predicts that Post-LN placement should disrupt gradient-boosted attention by distorting the additive correction through the LayerNorm Jacobian.
To test this, we retrain three models on WikiText-103 with Post-LN placement ($\mathbf{h}_{l+1} = \mathrm{LN}(\mathbf{h}_l + f(\mathbf{h}_l))$), keeping all other hyperparameters identical (single seed).

\begin{table}[t]
\centering
\caption{Effect of normalization placement on WikiText-103 test perplexity. Under Pre-LN, gradient-boosted attention improves over standard by $6.0\%$. Under Post-LN, it degrades by $9.6\%$. The parameter-fair baseline improves under both placements, ruling out a generic optimization difficulty with extra parameters under Post-LN.}
\label{tab:normalization}
\begin{tabular}{@{}lccc@{}}
\toprule
\textbf{Model} & \textbf{Params} & \textbf{Pre-LN} & \textbf{Post-LN} \\
\midrule
Standard ($d{=}256$)              & 7.4M & $72.2$ & $77.2$ \\
Standard ($d{=}288$, param-fair)  & 8.8M & $69.0$ & $71.8$ \\
Gradient-boosted ($M{=}2$)       & 8.7M & $\mathbf{67.9}$ & $84.6$ \\
\midrule
$\Delta$ Boosted vs.\ Standard   &      & $-6.0\%$ & $+9.6\%$ \\
$\Delta$ Param-fair vs.\ Standard &     & $-4.4\%$ & $-7.0\%$ \\
\bottomrule
\end{tabular}
\end{table}

Table~\ref{tab:normalization} shows the results.
Under Post-LN, gradient-boosted attention degrades perplexity from $77.2$ to $84.6$ ($+9.6\%$), \emph{reversing} the $6.0\%$ improvement observed under Pre-LN.
The parameter-fair baseline serves as a critical control: with nearly identical parameter count ($8.8$M vs.\ $8.7$M), it \emph{improves} perplexity under Post-LN ($77.2 \to 71.8$, $-7.0\%$), ruling out the hypothesis that extra parameters are generically harder to train under Post-LN.
The failure is specific to the boosting mechanism: Post-LN's rank-$(d{-}2)$ Jacobian (Proposition~\ref{prop:normalization}(b)) distorts the gated correction before it reaches the next layer, breaking the additive structure that gradient boosting requires.
This is consistent with the observation of \citet{he2016identity} that pre-activation placement---which preserves a clean identity shortcut---is essential for depth-dependent refinement in residual networks.

\subsection{Ablation Studies}
\label{sec:ablations}

We conduct ablations on the synthetic pattern retrieval task ($d = 64$, $K = 16$ patterns, noise $\sigma = 0.5$) where the mechanism is most transparent.

\paragraph{Number of boosting rounds.}
Table~\ref{tab:rounds} shows retrieval accuracy as a function of $M$.
The Bayes-optimal predictor (58.1\%) provides an analytical ceiling.
The jump from 1 to 2 rounds ($+9.5$ percentage points) accounts for the vast majority of the improvement, bringing accuracy from 46.0\% to 55.5\%---closing most of the gap to Bayes-optimal.
Additional rounds yield diminishing returns, with 4 rounds (58.1\%) nearly matching the Bayes-optimal ceiling.
This mirrors the well-known behavior of gradient boosting with strong base learners \citep{friedman2001greedy}, where early rounds capture the dominant signal and later rounds offer marginal gains.

\begin{table}[t]
\centering
\caption{Effect of boosting rounds on pattern retrieval accuracy (\%). Synthetic pattern retrieval task, $d=64$, $K=16$, $\sigma=0.5$. Bayes-optimal ceiling: 58.1\%.}
\label{tab:rounds}
\begin{tabular}{@{}lccccc@{}}
\toprule
\textbf{Rounds ($M$)} & 1 & 2 & 3 & 4 & 5 \\
\midrule
Accuracy (\%) & 46.0 & 55.5 & 57.6 & 58.1 & 57.0 \\
$\Delta$ from $M{=}1$ & --- & +9.5 & +11.6 & +12.1 & +11.0 \\
\bottomrule
\end{tabular}
\end{table}

\paragraph{Gate type.}
We compare three gate configurations with $M{=}2$ rounds: no gate ($\mathbf{g} = \mathbf{1}$, pure additive correction), scalar gate (one learned shrinkage value per round, matching the MART $\eta_m$), and per-dimension MLP gate.
Table~\ref{tab:gate} shows that all three produce similar accuracy, with even the no-gate variant improving substantially over the single-round baseline.
That a single learned scalar performs on par with the per-dimension MLP gate suggests that the separate projections $\mathbf{W}_Q^{(1)}, \mathbf{W}_K^{(1)}, \mathbf{W}_V^{(1)}$ already absorb per-dimension adjustment, leaving the gate to control only the overall correction magnitude.
More broadly, the narrow spread across gate types confirms that the residual-attention mechanism---not the gating---is the key ingredient.

\begin{table}[t]
\centering
\caption{Effect of gate type on pattern retrieval accuracy (\%). All gated variants use $M{=}2$ rounds. Same task as Table~\ref{tab:rounds}.}
\label{tab:gate}
\begin{tabular}{@{}lcccc@{}}
\toprule
\textbf{Gate type} & None & Scalar & MLP & Baseline ($M{=}1$) \\
\midrule
Accuracy (\%) & 55.4 & 56.5 & 55.6 & 48.2 \\
$\Delta$ from baseline & +7.2 & +8.3 & +7.4 & --- \\
\bottomrule
\end{tabular}
\end{table}

\paragraph{Scaling with problem difficulty.}
The benefit of boosting grows with problem difficulty.
Across configurations ranging from $(d{=}32, K{=}8)$ to $(d{=}128, K{=}32)$, the improvement over the single-round baseline increases with the number of patterns and with moderate noise levels ($\sigma \leq 0.5$): at $\sigma{=}0.3$ the gain reaches $+17.8$ percentage points for $(d{=}64, K{=}16)$ and $+14.6$ for $(d{=}32, K{=}8)$, with $+13.5$ for $(d{=}128, K{=}32)$.
This is precisely the regime where the initial attention pass makes systematic but correctable errors---where gradient boosting is most effective (full breakdown in Figure~\ref{fig:scaling_ablation}, Appendix~\ref{app:scaling}).

\subsection{Analysis}
\label{sec:analysis}

We analyze the trained gradient-boosted models to understand what the correction round learns and where it helps most.

\paragraph{Gate values across layers.}
The gate is an input-dependent MLP that maps the current cumulative output and correction to a per-dimension mask in $[0, 1]$.
Figure~\ref{fig:gate_analysis} shows the distribution of these gate activations across dimensions, averaged over 50 test sequences, for each of the four transformer layers.
Layer~0 is the most conservative (mean gate value $0.33$, standard deviation $0.08$), admitting only a modest fraction of the correction.
Layer~1 uses the correction most aggressively (mean $0.49$) and exhibits the highest variance across dimensions ($\sigma = 0.22$), indicating that some dimensions rely heavily on the correction while others suppress it.
Layers~2 and~3 fall in between ($0.39$ and $0.36$, respectively).
When given the capacity, the gate learns non-trivial, dimension-specific shrinkage consistent with the per-dimension generalization of the MART shrinkage parameter $\eta_m$ described in Section~\ref{sec:mart_connection}.
However, as the ablation in Section~\ref{sec:ablations} shows, a single scalar gate achieves comparable accuracy---the separate projections can absorb per-dimension adjustment when the gate cannot provide it.

\begin{figure}[t]
\centering
\includegraphics[width=0.7\textwidth]{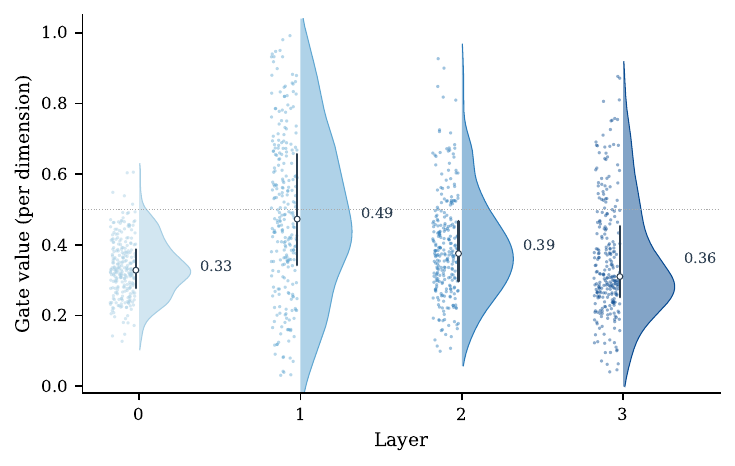}
\caption{Learned gate values per dimension for each transformer layer, averaged over 50 test sequences. The dashed line marks $g = 0.5$. Gate magnitudes and variation differ across layers, with layer~1 applying the strongest and most selective correction.}
\label{fig:gate_analysis}
\end{figure}

\paragraph{Convex hull escape.}
Proposition~\ref{prop:erasure}(a) establishes that a single attention step maps every query into $\mathrm{conv}(\mathbf{V}^{(0)})$, the convex hull of the round-0 value vectors, since the softmax produces nonnegative weights summing to one.
Proposition~\ref{prop:separate} predicts that the gated correction can move the output \emph{outside} this hull, because the correction values $\mathbf{V}^{(1)} = \mathbf{W}_V^{(1)}\mathbf{r} \in \mathbb{R}^{T \times d_h}$ span a different subspace than $\mathbf{V}^{(0)} = \mathbf{W}_V^{(0)}\mathbf{x} \in \mathbb{R}^{T \times d_h}$.
We verify this quantitatively: for each attention head at a given position $t$, we compute the $\ell_2$ distance in $\mathbb{R}^{d_h}$ from the boosted output to the nearest point in $\mathrm{conv}(\mathbf{v}_1^{(0)}, \ldots, \mathbf{v}_t^{(0)})$---the convex hull of the causally visible round-0 value vectors for that head---by solving a convex quadratic program over the probability simplex (details in Appendix~\ref{app:convex_hull}).
A distance of zero means the output can be expressed as a convex combination of the round-0 values; any positive distance indicates escape.

Figure~\ref{fig:convex_hull} shows the distribution of these distances across 600 randomly sampled (position, head) pairs per layer.
Every measured output lies strictly outside $\mathrm{conv}(\mathbf{V}^{(0)})$ across all four layers (100\% escape rate).
The per-layer ranking mirrors the gate analysis: layer~1, which applies the strongest correction (mean gate~$0.49$), escapes farthest (mean $\ell_2$ distance~$5.19$), while layer~0, the most conservative (mean gate~$0.33$), stays closest (mean distance~$0.88$).
Layers~2 and~3 fall in between ($4.25$ and $3.41$, respectively).
This confirms that the correction round's separate projections produce representations outside the original value space, as predicted by Proposition~\ref{prop:separate}.

\begin{figure}[t]
\centering
\includegraphics[width=0.7\textwidth]{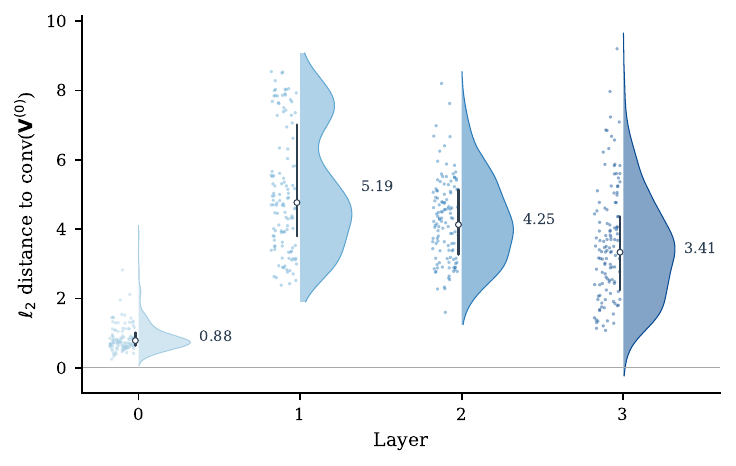}
\caption{$\ell_2$ distance in $\mathbb{R}^{d_h}$ ($d_h{=}64$) from the boosted output to the convex hull of round-0 value vectors $\mathrm{conv}(\mathbf{v}_1^{(0)}, \ldots, \mathbf{v}_t^{(0)})$, measured per attention head per position. Zero indicates the output lies within the hull; all 2{,}400 measured outputs are strictly positive, confirming escape at every layer. Annotated values are per-layer means. The ranking mirrors gate magnitudes (Figure~\ref{fig:gate_analysis}).}
\label{fig:convex_hull}
\end{figure}

\paragraph{Attention entropy.}
Figure~\ref{fig:attention_entropy} compares the entropy of each query's attention distribution across three settings: standard attention, and rounds~0 and~1 of the boosted model.
The comparison reveals a learned division of labor.
Round~0 of the boosted model is consistently \emph{more diffuse} than standard attention (mean 3.34 vs.\ 2.71 nats), while round~1 is more focused (2.55 nats).
The boosted model's initial round spreads attention more broadly than a standalone model would, delegating the sharpening to the correction round.
This effect is strongest in layer~1, where round~0 entropy exceeds standard by 27\% while round~1 drops 45\% below standard, and coincides with the highest gate values (Figure~\ref{fig:gate_analysis}).
The correlation between attention sharpness and gate openness suggests that the model learns to rely on the correction most when it can be made precise.

\begin{figure}[t]
\centering
\includegraphics[width=\textwidth]{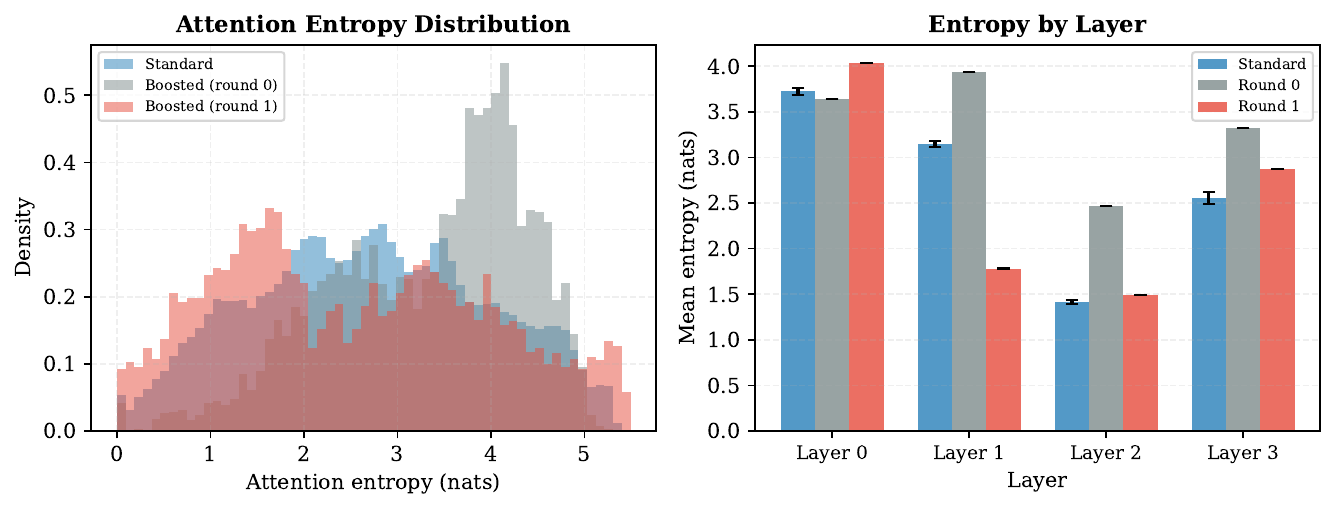}
\caption{\textbf{Left:} Distribution of attention entropy across all layers and heads for standard attention, boosted round~0, and boosted round~1. Round~0 is more diffuse than standard; round~1 is more focused. \textbf{Right:} Mean entropy per layer. The boosted model learns a division of labor: round~0 casts a wider net than standard attention, while round~1 sharpens, especially in layers~1--2.}
\label{fig:attention_entropy}
\end{figure}

\paragraph{Example-level corrections.}
Figure~\ref{fig:examples} illustrates this division of labor on individual predictions, showing the head-averaged attention in layer~1 for standard attention, boosted round~0, and boosted round~1.
We select two tokens from different articles where the boosted model dramatically outperforms standard attention.
In each case, standard attention and round~0 spread weight broadly across context tokens, while round~1 concentrates sharply on the tokens most relevant to the target.
For instance, when predicting the continuation of ``Ke'' (target: ``iser,'' completing the name Keiser), the correction round places its highest weight on ``Ke'' and the nearby geographical context, while standard attention predicts ``ong'' (confused by the earlier substring ``Yongsan'').
Similarly, for ``iron @-@ h'' $\to$ ``ul'' (completing ``hull''), the correction round attends sharply to ``iron,'' the hyphen, and ``h,'' while standard attention predicts the wrong subword.
These examples are selected from the top of the per-token improvement distribution (improvement $>4.8$ nats) and thus illustrate the best-case rather than the typical behavior; the overall improvement of $4.3$ perplexity points reflects the average across all tokens.

\begin{figure}[t]
\centering
\includegraphics[width=0.85\textwidth]{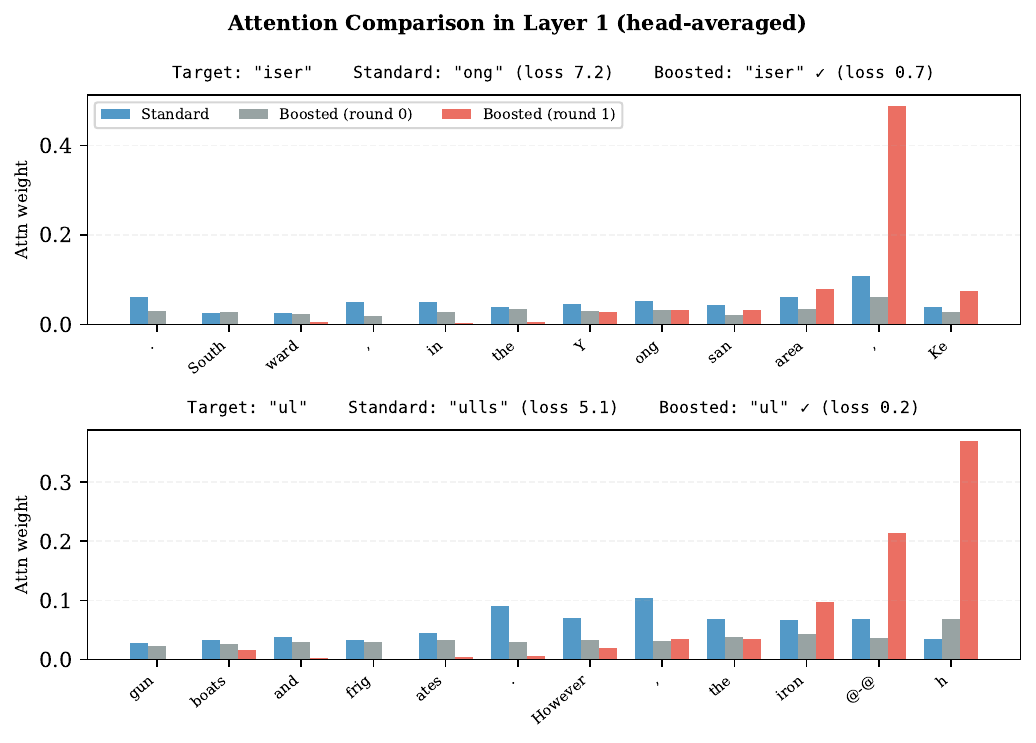}
\caption{Two tokens where gradient-boosted attention corrects a prediction error. Blue bars show standard attention, gray bars show boosted round~0, and red bars show boosted round~1. Standard and round~0 spread weight broadly; round~1 concentrates on the most relevant context. Each title shows the target, the standard model's prediction, and the boosted model's prediction with cross-entropy loss. Layer~1, head-averaged.}
\label{fig:examples}
\end{figure}

\section{Related Work}
\label{sec:related}

\paragraph{Attention and Hopfield networks.}
\citet{ramsauer2021hopfield} established the equivalence between transformer attention and one step of the modern continuous Hopfield network, identifying three types of fixed points (global average, metastable states, single-pattern retrieval).
\citet{smart2025incontext} showed that a single trained attention step implements a gradient descent update on a context-aware dense associative memory energy landscape, and that this one-step estimate can be closer to the Bayes-optimal denoiser than the converged fixed point.
Our work extends these findings by proving why convergence fails (Proposition~\ref{prop:erasure}) and proposing a constructive alternative (gradient-boosted attention).

\paragraph{Transformers as gradient descent.}
\citet{cheng2024transformers} proved that each transformer layer implements one step of functional gradient descent in the RKHS associated with the attention kernel.
This result strengthens the view that transformers can implement stagewise functional optimization closely related to boosting, though Cheng et al.\ did not make this connection to the boosting literature explicit.
\citet{huang2018learning} showed that ResNet blocks can be interpreted as boosting stages, and \citet{siu2019residual} formalized this analogy.
\citet{badirli2020grownet} used shallow neural networks as base learners in a gradient boosting ensemble.
Our work differs from all of the above by applying boosting within a single attention operation, at a finer granularity than the layer level.

\paragraph{Residual correction in attention.}
The closest prior work is Twicing Attention \citep{abdullaev2025twicing}, which applies Tukey's twicing \citep{tukey1977exploratory} within each attention layer.
Their correction smooths the residual $\mathbf{V} - \mathbf{A}\mathbf{V}$ with the \emph{same} attention matrix $\mathbf{A}$, yielding $(2\mathbf{A} - \mathbf{A}^2)\mathbf{V}$.
The theoretical justification is from nonparametric statistics: twicing reduces the bias of the Nadaraya-Watson estimator \citep{newey2004twicing}.
Our approach differs in three ways: (i) we apply separate learned projections to the \emph{residual}, producing new queries, keys, \emph{and} values that can span a different subspace than the original attention (Proposition~\ref{prop:separate}); (ii) we include a learned gate that adapts the correction magnitude per-input and per-dimension; (iii) our framing as gradient boosting provides a different theoretical lens and suggests natural extensions (more rounds, adaptive halting).
A natural question is whether the fixed correction $(2\mathbf{A} - \mathbf{A}^2)\mathbf{V}$ is optimal, or whether learning separate projections for the correction pass can do better; our experiments in Section~\ref{sec:experiments} address this directly.

\paragraph{Attention variants.}
Differential Transformer \citep{ye2025differential} computes two parallel attention maps and takes their difference, canceling common-mode noise.
This is a parallel subtractive mechanism, orthogonal to our sequential error-corrective one; the two could in principle be combined.
Gated Attention \citep{qiu2025gated} adds a post-attention sigmoid gate per head, introducing non-linearity and sparsity into a single attention pass; it does not compute a second pass or attend to residuals.

\paragraph{Iterative computation in transformers.}
Universal Transformers \citep{dehghani2019universal} iterate the same transformer block with shared parameters and adaptive halting.
Deep Equilibrium Models \citep{bai2019deep} find the fixed point of a single layer via implicit differentiation.
PonderNet \citep{banino2021pondernet} learns when to stop iterating.
All iterate the \emph{same} function on the accumulated state.
Gradient-boosted attention differs by operating on the \emph{residual} with \emph{different} projections---a distinction that Proposition~\ref{prop:erasure} shows is critical.

\paragraph{Normalization placement and residual structure.}
\citet{he2016identity} showed that pre-activation placement in ResNets preserves a clean identity shortcut, improving gradient flow and accuracy---the direct precursor of Pre-LN in transformers.
\citet{xiong2020layer} formally analyzed Pre-LN vs.\ Post-LN, proving that Post-LN produces gradient magnitudes that grow with layer index at initialization, while Pre-LN keeps them bounded.
\citet{elhage2021mathematical} formalized the transformer residual stream as an additive accumulator: each head and layer writes an independent contribution that is summed into a shared communication channel.
This additive decomposition---which Pre-LN preserves and Post-LN disrupts---is precisely the structure that gradient boosting requires.
Our Proposition~\ref{prop:normalization} connects these observations: the identity Jacobian under Pre-LN preserves boosted corrections, while Post-LN's rank-$(d{-}2)$ Jacobian distorts them.

\paragraph{Cross-layer residual methods.}
Several recent works improve information flow across transformer depth: DeepCrossAttention \citep{heddes2025deepcrossattention} replaces residual connections with depth-wise cross-attention, and Attention Residuals \citep{kimi2026attnres} replaces fixed skip connections with learned softmax attention over all preceding layer outputs, deployed at 48B-parameter scale.
These methods operate across layers; our approach operates within a single attention computation.

\section{Discussion}
\label{sec:discussion}

\paragraph{Limitations.}
Our experiments use small models (7--9M parameters) trained on two benchmarks with 10M tokens each.
While the param-fair comparison controls for capacity and the second benchmark confirms generalization, we have not yet established whether the improvement persists at the 100M--1B scale where most modern attention variants are evaluated \citep{ye2025differential}.
The additional computational cost of two attention passes per layer may be a concern for latency-sensitive applications, though the key-value computations could be shared to reduce overhead.
The Post-LN ablation uses a single seed; while the effect is large ($+9.6\%$ degradation), a multi-seed confirmation would strengthen the claim.

\paragraph{What scaling would show.}
The central question for future work is whether the 1.6\% relative improvement over a parameter-matched baseline holds, grows, or shrinks at larger model and data scales.
Gradient boosting typically helps most when individual learners are moderately strong---too weak and the residual is too noisy to fit, too strong and a single learner already captures most of the signal.
Attention in small models may be in the ``moderately strong'' regime where boosting is maximally effective; whether this persists at scale is an empirical question.

\paragraph{Future work.}
Natural extensions include: (i) applying gradient-boosted attention as a drop-in replacement during fine-tuning of pretrained models, which requires no large-scale pretraining; (ii) combining with Differential Transformer for joint noise cancellation and error correction; (iii) selectively applying boosting only at layers that benefit most, since the gate analysis (Section~\ref{sec:analysis}) shows that some layers barely use the correction while others rely on it heavily; (iv) analyzing whether the correction round develops interpretable specialization across heads and layers.

\paragraph{Connection to normalization placement.}
The finding that gradient-boosted attention requires Pre-LN placement connects our within-layer mechanism to a broader structural principle.
\citet{cheng2024transformers} proved that Pre-LN transformer layers implement functional gradient descent across depth---mathematically, gradient boosting across layers.
Our mechanism adds boosting within each layer, at a finer granularity.
Both levels of boosting require the same structural precondition: an additive residual stream where corrections are preserved without distortion.
Pre-LN satisfies this at both levels simultaneously; Post-LN disrupts it at both.
The historical shift from Post-LN \citep{vaswani2017attention} to Pre-LN, typically attributed to training stability \citep{xiong2020layer}, may also be understood as enabling the additive refinement dynamics that gradient boosting formalizes.

\paragraph{Broader motivation.}
Our architecture was partly motivated by the contrast between single-pass attention (a fast, parallel retrieval) and iterative convergence (a slower, sequential commitment to a single pattern)---a distinction reminiscent of dual-process theories in cognitive science.
The formal contribution, however, is the connection to gradient boosting, which provides precise theoretical tools rather than informal analogy.

\section{Conclusion}
\label{sec:conclusion}

We have introduced gradient-boosted attention, a mechanism that applies gradient boosting within a single attention layer.
A second attention pass, with its own learned projections, attends to the prediction error of the first pass and applies a gated correction.
We showed that the natural alternative---iterating the same attention operation---erases all query information orthogonal to the stored-pattern subspace, and under local contraction can collapse distinct queries in the same region to the same fixed point.
The formal correspondence to Friedman's MART framework connects a growing body of theoretical work on transformers-as-gradient-descent to the classical boosting literature.
Experiments on WikiText-103 and OpenWebText confirm that gradient-boosted attention outperforms standard attention, Twicing Attention, and a parameter-matched wider baseline on both benchmarks.
The normalization ablation reveals a structural requirement: the mechanism depends on Pre-LN's additive residual stream and degrades under Post-LN, consistent with the LayerNorm Jacobian analysis and with the broader principle that gradient boosting requires additive preservation of corrections.

\bibliography{references}

@article{friedman2001greedy,
  title={Greedy function approximation: a gradient boosting machine},
  author={Friedman, Jerome H},
  journal={Annals of Statistics},
  volume={29},
  number={5},
  pages={1189--1232},
  year={2001},
  publisher={Institute of Mathematical Statistics}
}

@inproceedings{vaswani2017attention,
  title={Attention is all you need},
  author={Vaswani, Ashish and Shazeer, Noam and Parmar, Niki and Uszkoreit, Jakob and Jones, Llion and Gomez, Aidan N and Kaiser, {\L}ukasz and Polosukhin, Illia},
  booktitle={Advances in Neural Information Processing Systems},
  volume={30},
  year={2017}
}

@inproceedings{ramsauer2021hopfield,
  title={Hopfield networks is all you need},
  author={Ramsauer, Hubert and Sch{\"a}fl, Bernhard and Lehner, Johannes and Seidl, Philipp and Widrich, Michael and Adler, Thomas and Gruber, Lukas and Holzleitner, Markus and Pavlovi{\'c}, Milena and Sandve, Geir Kjetil and others},
  booktitle={International Conference on Learning Representations},
  year={2021}
}

@inproceedings{cheng2024transformers,
  title={Transformers implement functional gradient descent to learn non-linear functions in context},
  author={Cheng, Xiang and Chen, Yuxin and Sra, Suvrit},
  booktitle={International Conference on Machine Learning},
  pages={8002--8037},
  year={2024}
}

@inproceedings{huang2018learning,
  title={Learning deep {ResNet} blocks sequentially using boosting theory},
  author={Huang, Furong and Ash, Jordan and Langford, John and Schapire, Robert},
  booktitle={International Conference on Machine Learning},
  year={2018}
}

@article{siu2019residual,
  title={Residual networks behave like boosting algorithms},
  author={Siu, Chapman},
  journal={arXiv preprint arXiv:1909.11790},
  year={2019}
}

@inproceedings{badirli2020grownet,
  title={Gradient boosting neural networks: {GrowNet}},
  author={Badirli, Sarkhan and Liu, Xuanqing and Xing, Zhengming and Bhatt, Avradeep and Cetin, Alina and Singh, Mononito},
  journal={arXiv preprint arXiv:2002.07971},
  year={2020}
}

@inproceedings{abdullaev2025twicing,
  title={Transformer meets twicing: Harnessing unattended residual information},
  author={Abdullaev, Laziz and Nguyen, Tan M},
  booktitle={International Conference on Learning Representations},
  year={2025}
}

@inproceedings{ye2025differential,
  title={Differential Transformer},
  author={Ye, Tianzhu and Dong, Li and Xia, Yuqing and Sun, Yutao and Zhu, Yi and Huang, Gao and Wei, Furu},
  booktitle={International Conference on Learning Representations},
  year={2025}
}

@article{qiu2025gated,
  title={Gated attention for large language models: Non-linearity, sparsity, and attention-sink-free},
  author={Qiu, Zhenyu and others},
  journal={arXiv preprint arXiv:2505.06708},
  year={2025}
}

@inproceedings{dehghani2019universal,
  title={Universal transformers},
  author={Dehghani, Mostafa and Gouws, Stephan and Vinyals, Oriol and Uszkoreit, Jakob and Kaiser, {\L}ukasz},
  booktitle={International Conference on Learning Representations},
  year={2019}
}

@inproceedings{bai2019deep,
  title={Deep equilibrium models},
  author={Bai, Shaojie and Kolter, J Zico and Koltun, Vladlen},
  booktitle={Advances in Neural Information Processing Systems},
  year={2019}
}

@article{banino2021pondernet,
  title={Ponder{N}et: Learning to ponder},
  author={Banino, Andrea and Balaguer, Jan and Blundell, Charles},
  journal={arXiv preprint arXiv:2106.01345},
  year={2021}
}

@inproceedings{smart2025incontext,
  title={In-context denoising with one-layer transformers: Connections between attention and associative memory retrieval},
  author={Smart, Matthew and Bietti, Alberto and Sengupta, Biswarup},
  booktitle={International Conference on Machine Learning},
  pages={55950--55971},
  year={2025}
}

@article{heddes2025deepcrossattention,
  title={Deep{C}ross{A}ttention: Supercharging transformer residual connections},
  author={Heddes, Lucas and others},
  journal={arXiv preprint arXiv:2502.06785},
  year={2025}
}

@article{kimi2026attnres,
  title={Attention Residuals},
  author={{Kimi Team}},
  journal={arXiv preprint arXiv:2603.15031},
  year={2026}
}

@book{tukey1977exploratory,
  title={Exploratory Data Analysis},
  author={Tukey, John W},
  year={1977},
  publisher={Addison-Wesley}
}

@article{newey2004twicing,
  title={Twicing kernels and a small bias property of semiparametric estimators},
  author={Newey, Whitney K and Hsieh, Fushing and Robins, James M},
  journal={Econometrica},
  volume={72},
  number={3},
  pages={947--962},
  year={2004}
}

@inproceedings{xiong2020layer,
  title={On layer normalization in the transformer architecture},
  author={Xiong, Ruibin and Yang, Yunchang and He, Ji and Zheng, Kai and Zheng, Shuxin and Xing, Chen and Zhang, Huishuai and Lan, Yanyan and Wang, Liwei and Liu, Tie-Yan},
  booktitle={International Conference on Machine Learning},
  pages={10524--10533},
  year={2020}
}

@inproceedings{he2016identity,
  title={Identity mappings in deep residual networks},
  author={He, Kaiming and Zhang, Xiangyu and Ren, Shaoqing and Sun, Jian},
  booktitle={European Conference on Computer Vision},
  pages={630--645},
  year={2016}
}

@article{elhage2021mathematical,
  title={A mathematical framework for transformer circuits},
  author={Elhage, Nelson and Nanda, Neel and Olsson, Catherine and Henighan, Tom and Joseph, Nicholas and Mann, Ben and Askell, Amanda and Bai, Yuntao and Chen, Anna and Conerly, Tom and others},
  journal={Transformer Circuits Thread},
  year={2021}
}

@article{gokaslan2019openwebtext,
  title={{OpenWebText} corpus},
  author={Gokaslan, Aaron and Cohen, Vanya},
  year={2019},
  url={http://Skylion007.github.io/OpenWebTextCorpus}
}

@article{merity2016pointer,
  title={Pointer sentinel mixture models},
  author={Merity, Stephen and Xiong, Caiming and Bradbury, James and Socher, Richard},
  journal={arXiv preprint arXiv:1609.07843},
  year={2016}
}
\bibliographystyle{plainnat}

\newpage
\appendix

\section{Hyperparameters and Training Details}
\label{app:hyperparams}

Table~\ref{tab:hyperparams} lists all hyperparameters for the language modeling experiments.
All models share the same training configuration; only the attention mechanism and normalization placement differ.
OpenWebText experiments use the same hyperparameters with a separately trained BPE tokenizer.

\begin{table}[h]
\centering
\caption{Hyperparameters for language modeling experiments.}
\label{tab:hyperparams}
\begin{tabular}{@{}ll@{}}
\toprule
\textbf{Hyperparameter} & \textbf{Value} \\
\midrule
Training tokens & 10M (subset of WikiText-103) \\
Vocabulary & BPE, 16,384 types \\
Sequence length & 256 \\
Layers & 4 \\
Model dimension $d$ & 256 (288 for param-fair) \\
Attention heads & 4 \\
FFN hidden dimension & $4d$ \\
Boosting rounds $M$ & 2 (for gradient-boosted attention) \\
Gate architecture & Linear($2d \to d$) + Sigmoid \\
\midrule
Optimizer & AdamW \\
Learning rate & $3 \times 10^{-4}$ \\
Weight decay & 0.01 \\
LR schedule & Cosine decay with linear warmup \\
Warmup steps & 1,500 \\
Epochs & 15 \\
Batch size & 32 \\
Gradient clipping & 1.0 \\
Dropout & 0.1 \\
Weight tying & Yes (embedding = output projection) \\
\midrule
Hardware & 1$\times$ NVIDIA RTX 2000 Ada (16GB) (WikiText-103) \\
 & 1$\times$ NVIDIA A100 80GB (OpenWebText, Post-LN) \\
 & Apple M1 (MPS) (synthetic ablations) \\
Training time per run & ${\sim}$30 min (RTX 2000), ${\sim}$10 min (A100) \\
Random seeds & 42, 123 \\
\bottomrule
\end{tabular}
\end{table}

\section{Convergence Speed}
\label{app:convergence}

Figure~\ref{fig:convergence} plots validation perplexity on WikiText-103 across epochs for all four models (averaged over two seeds).
Gradient-boosted attention reaches the parameter-fair baseline's final perplexity ($69.6$) by epoch~12, while the baseline itself requires all 15 epochs---20\% fewer optimizer steps.
It reaches the standard model's final perplexity ($72.2$) by epoch~10, a 33\% reduction.
Note that each training step is more expensive for the boosted model due to the second attention round; this comparison measures convergence in optimizer steps, not FLOPs or wall-clock time.

\begin{figure}[h]
\centering
\includegraphics[width=0.65\textwidth]{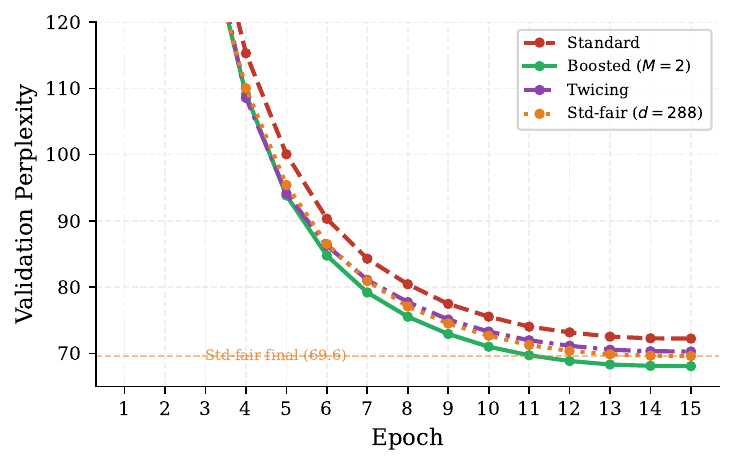}
\caption{Validation perplexity on WikiText-103 across training epochs (mean of 2 seeds). The dashed line marks the parameter-fair baseline's final perplexity. Gradient-boosted attention reaches this level 3 epochs earlier (20\% fewer optimizer steps), despite starting from a comparable first epoch.}
\label{fig:convergence}
\end{figure}

\section{Synthetic Pattern Retrieval Task Details}
\label{app:denoising}

The negative results (Section~\ref{sec:negative}) and ablation studies (Section~\ref{sec:ablations}) use a synthetic pattern retrieval task.
$K$ unit-normalized patterns $\mathbf{p}_1, \ldots, \mathbf{p}_K \in \mathbb{R}^d$ are sampled uniformly from the unit sphere.
A query is generated by selecting a pattern $\mathbf{p}_j$ uniformly at random and adding isotropic Gaussian noise: $\tilde{\mathbf{x}} = \mathbf{p}_j + \boldsymbol{\varepsilon}$, $\boldsymbol{\varepsilon} \sim \mathcal{N}(\mathbf{0}, \sigma^2 \mathbf{I})$.
Retrieval accuracy is the fraction of queries for which the nearest stored pattern (by Euclidean distance) to the model's output matches the generating pattern.
All synthetic experiments use mean squared error $\|\hat{\mathbf{x}} - \mathbf{p}_j\|^2$ as the training loss---consistent with the squared reconstruction objective underlying the MART framework (Section~\ref{sec:theory})---with Adam optimizer, learning rate $3 \times 10^{-3}$, batch size 512, and 150 training epochs.
Under this setup the task admits a closed-form Bayes-optimal predictor \citep{smart2025incontext}: $f_{\mathrm{opt}}(\tilde{\mathbf{x}}) = \sum_k \mathbf{p}_k \exp(\langle \mathbf{p}_k, \tilde{\mathbf{x}} \rangle / \sigma^2) / \sum_k \exp(\langle \mathbf{p}_k, \tilde{\mathbf{x}} \rangle / \sigma^2)$, which is softmax attention over the patterns with temperature $\beta = 1/\sigma^2$ and identity projections.
We report this baseline alongside the trained models.

\subsection{Scaling Across Configurations}
\label{app:scaling}

Figure~\ref{fig:scaling_ablation} shows the full scaling ablation across three dimension/pattern-count settings and multiple noise levels.
The improvement from boosting is largest at moderate noise ($\sigma \leq 0.5$) and higher dimensionality, where the initial attention pass makes systematic but correctable errors.
At very high noise ($\sigma \geq 0.8$), even the base learner's errors become difficult to correct, and gains diminish.

\begin{figure}[h]
\centering
\includegraphics[width=\textwidth]{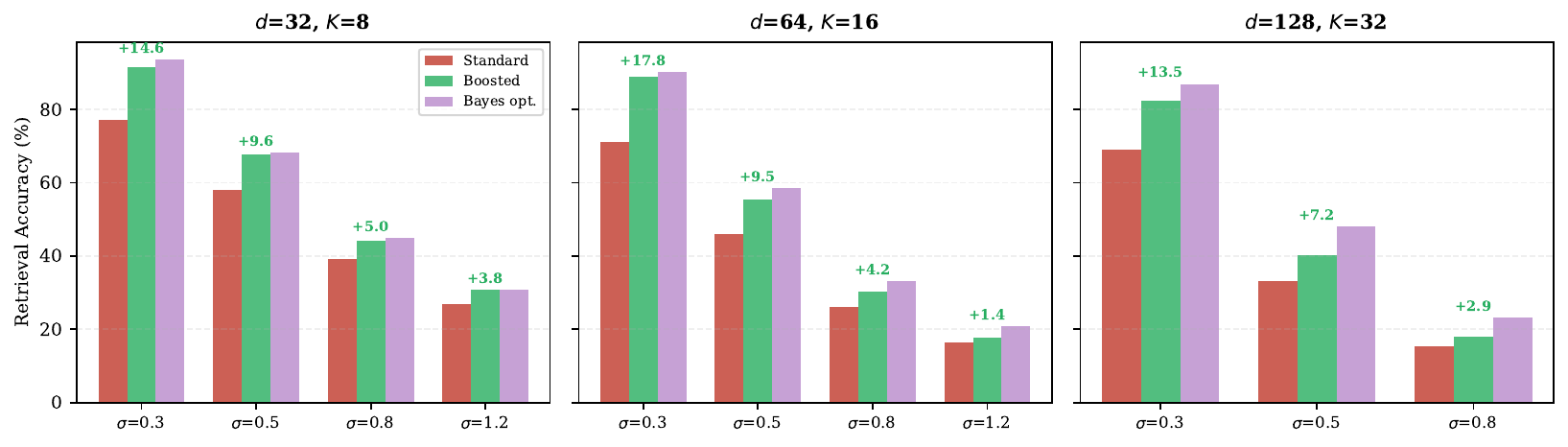}
\caption{Retrieval accuracy for standard vs.\ gradient-boosted attention ($M{=}2$, MLP gate) across configurations, with the Bayes-optimal ceiling shown for reference. Numbers above bars indicate the accuracy improvement in percentage points. Boosted attention closes most of the gap to the Bayes-optimal ceiling, especially at moderate noise levels ($\sigma \leq 0.5$).}
\label{fig:scaling_ablation}
\end{figure}

\section{Key/Value Source Ablation}
\label{app:kv_source}

An alternative design derives keys and values from the original input $\mathbf{x}$ rather than the residual $\mathbf{r}_m$:
\begin{equation}
    \mathrm{Attn}_m^{\text{(input)}}(\mathbf{r}_m) = \mathrm{softmax}\!\left(\frac{\mathbf{W}_Q^{(m)} \mathbf{r}_m \cdot (\mathbf{W}_K^{(m)} \mathbf{x})^\top}{\sqrt{d_h}}\right) \mathbf{W}_V^{(m)} \mathbf{x}.
\end{equation}
This ``K,V from input'' variant preserves the asymmetry between what the correction \emph{attends to} (the residual) and what it \emph{retrieves from} (the original context), analogous to cross-attention.
We compare the two designs on the WikiText-103 language modeling task using the same training setup as Section~\ref{sec:experiments}.

\begin{table}[h]
\centering
\caption{Effect of key/value source on WikiText-103 test perplexity. Deriving all projections from the residual outperforms the cross-attention variant by 1.4 points.}
\label{tab:kv_source}
\begin{tabular}{@{}lcc@{}}
\toprule
\textbf{K,V source} & \textbf{Test PPL} & \textbf{Val PPL} \\
\midrule
Residual $\mathbf{r}_m$ (ours) & $\mathbf{67.9}$ & $\mathbf{68.1}$ \\
Original input $\mathbf{x}$ & $69.3$ & $69.5$ \\
\bottomrule
\end{tabular}
\end{table}

Deriving all three projections from the residual yields 1.4 points lower perplexity (Table~\ref{tab:kv_source}).
We hypothesize that this is because the residual-derived keys and values allow the correction round to operate entirely within the residual subspace, producing value vectors $\mathbf{V}' = \mathbf{W}_V^{(1)}\mathbf{r}$ that are tailored to the error signal rather than the original representation.
This is consistent with Proposition~\ref{prop:separate}: since $\mathbf{V}' = \mathbf{W}_V^{(1)}\mathbf{r}$ spans a different subspace than $\mathbf{V}^{(0)} = \mathbf{W}_V^{(0)}\mathbf{x}$, the correction operates in a representation space distinct from round~0, whereas deriving values from $\mathbf{x}$ constrains both rounds to the same representational space.
Our implementation supports both variants via a \texttt{kv\_source} flag.

\section{Convex Hull Distance Computation}
\label{app:convex_hull}

To quantify the convex hull escape reported in Section~\ref{sec:analysis}, we compute the exact $\ell_2$ distance from the boosted output to $\mathrm{conv}(\mathbf{V}^{(0)})$ for individual attention heads.
Let $\mathbf{o} \in \mathbb{R}^{d_h}$ denote the boosted output for a single head at position $t$, and let $\mathbf{v}_1^{(0)}, \ldots, \mathbf{v}_t^{(0)} \in \mathbb{R}^{d_h}$ denote the round-0 value vectors for the same head at positions $1$ through $t$ (respecting the causal mask).
We find the nearest point in the convex hull by solving:
\begin{equation}
    d\!\left(\mathbf{o},\; \mathrm{conv}(\mathbf{V}^{(0)})\right) = \min_{\boldsymbol{\alpha} \in \mathbb{R}^t} \left\| \mathbf{o} - \sum_{i=1}^{t} \alpha_i \, \mathbf{v}_i^{(0)} \right\|_2 \quad \text{s.t.} \quad \alpha_i \geq 0 \;\;\forall i, \quad \sum_{i=1}^{t} \alpha_i = 1.
\end{equation}
This is a convex quadratic program (QP) with $t$ variables, one equality constraint, and $t$ nonnegativity constraints.
We solve it using Sequential Least-Squares Programming (SLSQP) via \texttt{scipy.optimize.minimize}, initialized at the uniform distribution $\alpha_i = 1/t$.
For computational tractability, we restrict to positions $t \leq 64$, limiting each QP to at most 64 variables.
For each of the four transformer layers, we uniformly sample 600 (position, head) pairs from 30 randomly selected test sequences.
In our experiments, $d_h = d / n_h = 256 / 4 = 64$, so each QP operates in $\mathbb{R}^{64}$.

\end{document}